\documentclass[a4paper,reqno]{amsart}

\usepackage[english]{babel}
\usepackage[utf8]{inputenc}
\usepackage{amsmath, amsthm}
\usepackage{graphicx}
\usepackage{geometry}
\usepackage{amssymb}
\usepackage{fancyhdr}
\usepackage{tikz-cd}
\usepackage{physics}
\usepackage{natbib}
\pagestyle{fancy}
\numberwithin{equation}{section}

%%% PAGE DIMENSIONS
\geometry{a4paper} % or letterpaper (US) or a5paper or....
\geometry{margin=1.6in}

\theoremstyle{definition}
\newtheorem{definition}{Definition}
\newtheorem{ex}{Example}
\newenvironment{example}
  {%
   \pushQED{\qed}\begin{ex}}
  {\popQED\end{ex}}
\newtheorem{remark}{Remark}

\theoremstyle{plain}
\newtheorem{theorem}{Theorem}
\newtheorem{lemma}[theorem]{Lemma}
\newtheorem{proposition}[theorem]{Proposition}
\newtheorem{corollary}[theorem]{Corollary}

\DeclareMathOperator\supp{supp}
\DeclareMathOperator\Id{Id}

\lhead{}
\rhead{}

\title{Homogeneous Vector Bundles and\\ $G$-equivariant Convolutional Neural Networks}
\author{Jimmy Aronsson}
\address{Chalmers University of Technology, Department of Mathematical Sciences\\
SE-412\,96 Gothenburg, Sweden}
\email{jimmyar@chalmers.se}
\date{\today}

\begin{document}

\begin{abstract}
$G$-equivariant convolutional neural networks (GCNNs) is a geometric deep learning model for data defined on a homogeneous $G$-space $\mathcal{M}$. GCNNs are designed to respect the global symmetry in $\mathcal{M}$, thereby facilitating learning. In this paper, we analyze GCNNs on homogeneous spaces $\mathcal{M} = G/K$ in the case of unimodular Lie groups $G$ and compact subgroups $K \leq G$. We demonstrate that homogeneous vector bundles is the natural setting for GCNNs. We also use reproducing kernel Hilbert spaces to obtain a precise criterion for expressing $G$-equivariant layers as convolutional layers. This criterion is then rephrased as a bandwidth criterion, leading to even stronger results for some groups.
\end{abstract}

\maketitle

\tableofcontents

\section{Introduction}

Developments in deep learning have increased dramatically in recent years. Even though multilayer perceptrons \cite{alpaydin2020introduction} and other general-architecture models work well for some tasks, achieving higher levels of performance often requires models that are more tailored to each application, and which incorporate some level of understanding of the data. \emph{Geometric deep learning} \cite{bronstein2017geometric,bronstein2021geometric,cao2020comprehensive,cohen2019gauge,monti2017geometric} is the approach of using inherent geometric structure in data, and symmetry derived from geometry, to improve deep learning models.

\emph{Convolutional neural networks (CNNs)} are among the simplest and most broadly applicable general-architecture models. They have been successfully applied to image classification and segmentation \cite{peng2017large,xu2017large,zhang2018fully}, text summarization \cite{song2019abstractive}, pose estimation \cite{mehta2017vnect}, sign language recognition \cite{koller2018deep}, and many other tasks. One reason why CNNs are so useful is that \emph{convolutional layers}, the basic building blocks of CNNs, commute with the translation operator in $\mathbb{Z}^2$; convolutional layers are \emph{translation equivariant}. In image classification tasks, for instance, $\mathbb{Z}^2$ represents the underlying pixel lattice, and translation equivariance helps CNNs identify objects in images regardless of their exact pixel coordinates. As convolutional layers respect the global translation symmetry in $\mathbb{Z}^2$, CNNs are examples of geometric deep learning models.

\emph{$G$-equivariant convolutional neural networks (GCNNs)} \cite{cohen2016group,cohen2018general} are generalizations of CNNs to data points defined on homogeneous $G$-spaces $\mathcal{M}$. Convolutional layers that commute with the action $G \times \mathcal{M} \to \mathcal{M}$ of the global symmetry group $G$, remove the need for GCNNs to learn about the global symmetry. It is already built into the network. This enables GCNNs to focus on learning other relevant features in data, potentially improving performance. One example is the detection of tumors in digital pathology. Images of tumors can have any orientation, and GCNNs with both translation and rotation equivariant layers have higher accuracy than ordinary CNNs \cite{veeling2018rotation}. Rotation equivariance is also highly useful in 3D inference problems \cite{worrall2018cubenet}, in point cloud recognition \cite{li2019discrete}, and in other tasks.

\emph{Gauge equivariant} neural networks \cite{cheng2019covariance,cohen2019gauge,favoni2020lattice,luo2021gauge} are instead designed to respect \emph{local} symmetries. For example, computations involving vector fields - in meteorology or other areas - require vectors to be expressed in components. This requires a frame; a smooth assignment of a basis to each tangent space. However, the sphere and other non-parallelizable manifolds do not admit a global frame, so the computations must be performed locally, using different local frames for different regions on the manifold. It is then important that any numerical results obtained in one frame are compatible with those obtained in any other frame on overlapping regions. In other words, the computations should be equivariant with respect to the choice of local frame, which is a viewed as a gauge degree of freedom; a local symmetry. Gauge equivariant neural networks have also been introduced for problems exhibiting other local symmetries, primarily in lattice gauge theory.

In this paper, we study the mathematical foundations of GCNNs and characterize convolutional layers in terms of more abstract layers. Our contributions are threefold:
\begin{itemize}
\item We analyze a general framework that include both gauge equivariant neural networks and GCNNs, that only differ in whether layers respect a local gauge symmetry or a global translation symmetry. Moreover, we show that GCNNs are naturally expressed in terms of homogeneous vector bundles.

\item In general, not all $G$-equivariant layers can be written as convolutional layers. We investigate the relation between these types of layers for all homogeneous spaces $\mathcal{M} = G/K$ when $G$ is a unimodular Lie group and $K \leq G$ is a compact subgroup. As a result of this investigation, we find a criterion for expressing $G$-equivariant layers as convolutional layers (Theorem \ref{thm:main}).

\item We highlight the close relationship between convolutional layers in GCNNs, reproducing kernel Hilbert spaces (RKHS), and bandwidth. We reformulate the criterion in Theorem \ref{thm:main} as a bandwidth criterion and prove that, when $G$ is discrete abelian or finite,\footnote{Discrete Lie groups are countable, in this paper, as we assume smooth manifolds to be Hausdorff and second-countable.} all $G$-equivariant layers are indeed convolutional layers (Corollaries \ref{corr:main2}-\ref{corr:finiteG}).
\end{itemize}
This work was inspired by a number of papers \cite{cheng2019covariance,cohen2016group,cohen2018spherical,cohen2018general,cohen2019gauge,weiler20183d}. The theoretical papers \cite{cheng2019covariance,cohen2018general} have been of particular importance, as our work grew from a desire to understand the mathematics of equivariant neural networks in even greater detail.

In the case of compact groups $G$, the Peter-Weyl theorem and other powerful tools have allowed researchers to study GCNNs using harmonic analysis. Among the most well-known results in this direction is Theorem 1 in \cite{kondor2018generalization}, which uses Fourier analysis on $G$ to establish that the layers in a $G$-equivariant feed-forward neural network must be generalized convolutional layers, when $G$ is compact. This result is similar to our second contribution above and we discuss the distinction in Section \ref{subsec:group}. Others have used the well-known representation theory of the compact group $G = SO(3)$ to study rotation equivariant GCNNs for spherical data \cite{esteves2020theoretical,esteves20173d,esteves2018learning}.

The paper is structured as follows. We summarize the relevant machine learning background in Section \ref{subsec:CNN}, and discuss a framework for equivariant neural networks in Section \ref{subsec:ENN}. In Section \ref{sec:GCNN}, we restrict attention to homogeneous spaces $G/K$ where $G$ is a unimodular Lie group and \mbox{$K \leq G$} is a compact subgroup. Section \ref{subsec:hombundle} explains the relation between GCNNs, homogeneous vector bundles, and induced representations. This relation is used to motivate the definition of $G$-equivariant layers in Section \ref{subsec:group}, where we also discuss convolutional layers and prove the aforementioned Theorem \ref{thm:main}; this result characterizes when a $G$-equivariant layer is a convolutional layer, in terms of RKHS. Section \ref{subsec:RKHS} then relates RKHS to bandwidth, leading to a reformulation of Theorem \ref{thm:main} (Corollary \ref{corr:main2}) as well as a few stronger results. Finally, in Section \ref{sec:discussion}, we summarize our work and end with a discussion.

\section{Foundations of equivariant neural networks}

In this section, we give an introduction to convolutional neural networks (CNNs) and discuss a simple framework for equivariant neural networks.

\subsection{Convolutional neural networks}\label{subsec:CNN} 

CNNs were first introduced in 1979 under the name of Neocognitrons, and were used to study visual pattern recognition \cite{fukushima1982neocognitron}. In the 1990s, CNNs were successfully applied to problems such as automatic recognition of handwritten digits \cite{lecun1995learning} and face recognition \cite{lawrence1997face}. However, it was arguably not until 2012, when the GPU-based AlexNet CNN outperformed all competition on the ImageNet Large Scale Visual Recognition Challenge \cite{krizhevsky2012imagenet}, that CNNs and other neural networks truly caught the public eye. Industrial work and academic research on deep learning has since soared, and current state-of-the-art deep learning architectures are significantly more powerful and more complex than AlexNet. Yet, convolutional layers remain important components.

In this introduction, we focus on data that can be represented by finitely supported functions
\begin{equation}\label{eq:CNNdata}
f : \mathbb{Z}^2 \to \mathbb{R}^m.
\end{equation}
Digital images, for example, are of this form since each pixel $x \in \mathbb{Z}^2$ is associated with a color array $f(x) \in \mathbb{R}^m$, and finite support is analogous to finite image resolution. Note that $m = 1$ corresponds to grayscale images and $m = 3$ to RGB images, but we allow any number of channels $m$.  In general, any data represented by a finite 2D ($m=1$) or 3D  ($m>1$) array with real-valued entries is of the form \eqref{eq:CNNdata}.

\emph{Convolutional layers} act on data points \eqref{eq:CNNdata} by\footnote{The name \emph{convolutional layer} is used even \mbox{though \eqref{eq:convlayer_sec2} more closely resembles a cross-correlation}. It can be expressed as a convolution if we replace the kernel with its involution $\kappa^*(y) = \kappa(-y)$.}
\begin{equation}\label{eq:Z2conv}
[\kappa \star f](x) = \sum_{y \in \mathbb{Z}^2} \kappa(y-x) f(y),
\end{equation}
given a matrix-valued kernel $\kappa : \mathbb{Z}^2 \to \mathrm{Hom}(\mathbb{R}^m,\mathbb{R}^n)$ for some $n \in \mathbb{N}$. The kernel is also finitely supported in practice, so the maps $\kappa \star f : \mathbb{Z}^2 \to \mathbb{R}^n$ are themselves data points \eqref{eq:CNNdata} with $n$ channels. Broadly speaking, CNNs consist of convolutional layers \eqref{eq:Z2conv} combined with other transformations, such as non-linear activation functions and batch normalization layers. We are mainly interested in convolutional layers, so we do not go into detail about non-linear activation functions or other types of layers. For more extensive descriptions of CNNs, see \cite{albawi2017understanding,Goodfellow-et-al-2016,wu2017introduction}.

In image classification tasks, for instance, CNNs categorize digital images into a pre\-defined number $k$ of distinct classes, based on what the images depict. The CNN maps each digital image $f : \mathbb{Z}^2 \to \mathbb{R}^m$ to a probability vector in $\mathbb{R}^k$ estimating the probability that $f$ belongs to any given class. During training, this probability vector is compared to the correct answer (which is known) and the discrepancy is computed using a \emph{loss} norm or distance function. A gradient descent-based algorithm minimizes the loss function, thereby \emph{learning} the kernel matrix elements and any other trainable network parameters. The result of this training procedure is a CNN that accurately classifies images in the training data set. Finally, the predictive power of the CNN is evaluated by using it to classify images from a test data set; images that were not used during training and which the CNN has not encountered before.

CNNs perform very well on image classification and similar machine learning tasks, and are important parts of many state-of-the-art network architectures on such tasks  \cite{brock2021high,jia2021scaling,tan2021efficientnetv2,wu2021cvt}. One reason for their success is \emph{translation equivariance}: Convolutional layers \eqref{eq:Z2conv} commute with the translation operator in the image plane,
\begin{equation}
L_x : \mathbb{Z}^2 \to \mathbb{Z}^2, \qquad L_x(y) = y+x, \qquad x \in \mathbb{Z}^2.
\end{equation}
Translation equivariance makes CNNs agnostic to the specific locations of individual pixels, while still taking into account the relative positions of different pixels; images are more easily classified based on relevant features of their subjects, and not based on technical artifacts such as specific pixel coordinates. This observation motivates  the introduction of more general convolutional layers that act equivariantly on data points $f : \mathcal{M} \to V$, where the domain  $\mathcal{M}$ is homogeneous with respect to a locally compact group $G$ \cite{cohen2016group,cohen2018general}. Given finite-dimensional vector spaces $V,W$, convolutional layers are defined as certain vector-valued integrals\footnote{For a summary on vector-valued integration on locally compact groups, see \cite[Appendix 4]{folland2016course}.}
\begin{equation}\label{eq:convlayer_sec2}
\kappa \star f : \mathcal{M} \to W, \qquad (\kappa \star f)(g) = \int_G \kappa(g^{-1}g') f(g') \ \dd g',
\end{equation}
with operator-valued kernels $\kappa : G \to \mathrm{Hom}(V,W)$.

\begin{remark}
In \eqref{eq:convlayer_sec2}, we integrate with respect to a Haar measure on the unimodular Lie group $G$.  
\end{remark}

Broadly speaking, $G$-equivariant convolutional neural networks (GCNNs) consist of sequences of convolutional layers \eqref{eq:convlayer_sec2} mixed with non-linear activation functions, and possibly other layers that are equivariant with respect to the global symmetry. This characterization is intentionally vague as we want to avoid making unnecessarily restrictive assumptions on the layers. For this reason, we will not study GCNNs from a holistic perspective, as a sequence of multiple layers, but instead focus on individual layers. We give a formal definition of abstract, $G$-equivariant layers in Definition \ref{def:Glayer}, before defining the more specific convolutional layers in Definition \ref{def:convlayer}.

\subsection{Gauge theory and the equivariant framework}\label{subsec:ENN} 

Before going into detail about GCNNs in Section \ref{sec:GCNN}, let us describe a mathematical framework for equivariant neural networks. The framework is based on gauge theoretic concepts but is equally suitable for GCNNs. Both gauge equivariant neural networks and GCNNs will thus be described by this framework, their main difference being the specific equivariance properties imposed on layers.

\begin{remark}
This framework is already being used in GCNNs and gauge equivariant neural networks separately \cite{cheng2019covariance,cohen2018general}. We are simply presenting the unified theory that includes both types of equivariance as separate cases.
\end{remark}

Gauge theory originated in physics as a way to model local symmetry. In quantum electrodynamics (QED), for example, the electron wave function can be locally phase shifted, $\psi \mapsto e^{i\alpha} \psi$, with no physically observable consequence, and so QED is said to possess a $U(1)$ gauge symmetry. Mathematicians have later adopted gauge theory in order to study other types of local symmetries. The introduction of gauge equivariant deep learning models has been suggested by deep learning practitioners and physicists alike. For example, \cite{cheng2019covariance} investigates the structure of gauge equivariant layers used for vector fields, tensor fields, and more general fields. Physicists have introduced gauge equivariant neural networks for applications in, e.g., lattice gauge theory  \cite{boyda2021finding,favoni2020lattice,luo2021gauge}.

We assume some familiarity with fiber bundles,\footnote{Introductions to fiber bundles can be found in \cite{kolar2013natural,lee2013smooth,nakahara2003geometry}.} but we still present a few relevant definitions and examples.

\begin{definition}\label{def:principal_bundle}
Let $K$ be a Lie group. A smooth fiber bundle $\pi : P \to \mathcal{M}$ is called a \emph{principal $K$-bundle} with \emph{structure group} $K$ if there is  a free, smooth right $K$-action
\begin{equation}
P \times K \to P, \qquad (p,k) \mapsto p \triangleleft k,
\end{equation}
with the following properties for each $x \in \mathcal{M}$.
\begin{enumerate}
\item[(i)]  Let $P_x = \pi^{-1}(\{x\})$ be the fiber at $x$. Then
\begin{equation}
p \in P_x  , \ k \in K \quad \Rightarrow \quad p \triangleleft k \in P_x.
\end{equation}
That is, the $K$-action preserves fibers.
\item[(ii)] For each $p \in P_x$, the mapping $k \mapsto p \triangleleft k$ is a diffeomorphism $K \to P_x$.
\end{enumerate}
\end{definition}

Principal bundles are natural tools for understanding local symmetries, i.e., gauge degrees of freedom. In theoretical physics, gauge degrees of freedom are redundancies in the mathematical theory with no physical relevance. This is both a blessing and a curse: Solving the Yang-Mills equations of motion as an initial value problem, for example, is an underdetermined problem that cannot be solved without taking the gauge degrees of freedom into account; without choosing a  \emph{gauge} \cite{Muller2019YangMills}. This is similar to our example in the introduction, that computations involving vector fields may require a choice of basis in each tangent space, even if this choice is irrelevant for the underlying application. On the other hand, problems may also become easier to solve by choosing a gauge with some finesse.\footnote{See, for example, the temporal gauge in lattice gauge theory \cite[\S 3.3.2]{gattringer2009quantum}.}

\begin{definition}
Let $\pi : P \to \mathcal{M}$ be a principal $K$-bundle and assume $U \subseteq \mathcal{M}$ is open.
\begin{enumerate}
\item[(i)] A \emph{gauge} is a local section $\omega : U \to P$.

\item[(ii)] A \emph{gauge transformation} is an automorphism $\chi : P \to P$ that is equivariant,
\begin{equation}
\chi( p \triangleleft k) = \chi(p) \triangleleft k, \qquad p \in P, k \in K,
\end{equation}
and which preserves fibers: $\pi \circ \chi = \pi$.
\end{enumerate}
\end{definition}

We will go into more detail about the vector field example in Example \ref{ex:vector_field}. However, we first need to define associated bundles. To this end, let $\pi : P \to \mathcal{M}$ be a principal $K$-bundle and let $\rho : K \to GL(V_\rho)$ be a finite-dimensional representation. Define an equivalence relation $\sim$ on $P \times V_\rho$ by
\begin{equation}
(p,v) \sim (p \triangleleft k , \rho(k)^{-1} v), \qquad p \in P, v \in V_\rho, k \in K.
\end{equation}
Let $P \times_\rho V_\rho = (P \times V_\rho) / \sim$ denote the quotient space, whose elements are equivalence classes
\begin{equation}
[p,v] = [p \triangleleft k , \rho(k)^{-1}v], \qquad p \in P, v \in V_\rho, k \in K,
\end{equation}
and consider the projection $\pi_\rho : P \times_\rho V_\rho \to \mathcal{M}$ defined by $\pi_\rho([p,v]) = \pi(p)$. Observe that each fiber $\pi_\rho^{-1}(\{x\})$ has a natural vector space structure such that the mapping
\begin{equation}
V_\rho \to \pi_\rho^{-1}(\{x\}), \qquad v \mapsto [p,v],
\end{equation}
is a linear isomorphism for each fixed $p \in P_x$.

\begin{lemma}[{\cite[\S 10.7]{kolar2013natural}}]
The \emph{associated bundle} $\pi_\rho : P \times_\rho V_\rho \to \mathcal{M}$ is a smooth vector bundle.
\end{lemma}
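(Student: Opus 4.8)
The plan is to build the vector bundle structure on $P \times_\rho V_\rho$ from a system of local trivializations of the principal bundle $\pi : P \to \mathcal{M}$. First I would fix an open cover $\{U_\alpha\}_\alpha$ of $\mathcal{M}$ together with $K$-equivariant trivializations $\Phi_\alpha : \pi^{-1}(U_\alpha) \to U_\alpha \times K$, where $K$ acts on $U_\alpha \times K$ by right multiplication on the second factor. Each $\Phi_\alpha$ yields a smooth local section $\omega_\alpha : U_\alpha \to P$, $\omega_\alpha(x) = \Phi_\alpha^{-1}(x, e)$, and by freeness and transitivity of the $K$-action on fibers every $p \in P_x$ equals $\omega_\alpha(x) \triangleleft k$ for a unique $k \in K$. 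On overlaps there are smooth transition functions $k_{\beta\alpha} : U_\alpha \cap U_\beta \to K$ with $\omega_\alpha(x) = \omega_\beta(x) \triangleleft k_{\beta\alpha}(x)$, satisfying the usual cocycle identity.

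Next I would define candidate trivializations of the associated bundle by
\[
\Psi_\alpha : \pi_\rho^{-1}(U_\alpha) \to U_\alpha \times V_\rho, \qquad \Psi_\alpha\big([\omega_\alpha(x), v]\big) = (x, v).
\]
This is well defined and bijective because every element of $\pi_\rho^{-1}(\{x\})$ has a representative $[\omega_\alpha(x), v]$ with $v$ uniquely determined, using the identity $[\,p \triangleleft k, v\,] = [\,p, \rho(k) v\,]$ that follows from the definition of $\sim$. Each $\Psi_\alpha$ is linear on fibers, and on $U_\alpha \cap U_\beta$ the overlap map is
\[
(\Psi_\beta \circ \Psi_\alpha^{-1})(x, v) = \big(x, \rho(k_{\beta\alpha}(x))\, v\big),
\]
so the transition functions are $x \mapsto \rho(k_{\beta\alpha}(x)) \in GL(V_\rho)$. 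Since $\rho$ is smooth and each $k_{\beta\alpha}$ is smooth, these are smooth $GL(V_\rho)$-valued maps, and they inherit the cocycle condition from $\{k_{\beta\alpha}\}$.

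Finally I would invoke the standard vector bundle chart lemma (the content cited from \cite[\S 10.7]{kolar2013natural}): given a surjection onto a smooth manifold, a vector space structure on each fiber, and a family of fiberwise-linear bijections onto $U_\alpha \times V_\rho$ whose transition maps are smooth $GL(V_\rho)$-valued cocycles, there is a unique smooth manifold structure on the total space making the projection a smooth vector bundle with those bijections as local trivializations. Applying it to $\pi_\rho$, $\{U_\alpha\}$, $\{\Psi_\alpha\}$ produces the asserted vector bundle, and a short additional check shows that the resulting smooth structure is compatible with the quotient topology, so that the quotient map $P \times V_\rho \to P \times_\rho V_\rho$ is a smooth surjective submersion and $\pi_\rho$ is smooth. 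I expect the main obstacle to be purely organizational: verifying well-definedness and the overlap formula for the $\Psi_\alpha$, and reconciling the chart-induced topology with the quotient topology. None of these steps is deep, but each must be carried out with care.
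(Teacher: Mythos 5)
Your construction is correct and is essentially the standard argument that the paper delegates to the cited reference (the paper gives no proof of its own, only the citation to \cite[\S 10.7]{kolar2013natural}, where the bundle structure is likewise obtained from the principal-bundle trivializations via the transition cocycle $x \mapsto \rho(k_{\beta\alpha}(x))$ and the vector bundle chart lemma). The only point worth making explicit is that the representation $\rho : K \to GL(V_\rho)$ must be smooth for the transition maps to be smooth, which is automatic here since a continuous finite-dimensional representation of a Lie group is smooth.
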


\begin{example}\label{ex:vector_field}
Let $d = \dim \mathcal{M}$ and consider a coordinate chart $\left(u^1,\ldots,u^d\right) : U \to \mathbb{R}^d$, for $U \subseteq \mathcal{M}$. Recall that for each $x \in U$, the induced coordinate basis in $T_x\mathcal{M}$,
\begin{equation}\label{eq:local_frame}
\omega(x) := \left(  \frac{\partial}{\partial u^1}\Big|_x,\ldots,\frac{\partial}{\partial u^d}\Big|_x \right),
\end{equation}
lets us express tangent vectors $X_x \in T_x\mathcal{M}$ in components $X_x^1,\ldots,X_x^d \in \mathbb{R}$. Moreover, \eqref{eq:local_frame} defines a local frame $\omega : U \to F\mathcal{M}$ that sends each point $x \in U$ to its coordinate basis in $T_x\mathcal{M}$. Local frames are sections of the frame bundle $F\mathcal{M}$, which is a principal $GL(d,\mathbb{R})$-bundle, hence local frames are examples of gauges.

Fix $x \in U$ and expand $X_x \in T_x\mathcal{M}$ in the coordinate basis:
\begin{equation}\label{eq:tangent_vector}
X_x = \sum_{i=1}^d X_x^i \frac{\partial}{\partial u^i} \Big|_x.
\end{equation}
By decomposing \eqref{eq:tangent_vector} into components and basis vectors, we can view the tangent vector $X_x$ as one giant tuple
\begin{equation}
X_x = \left( \frac{\partial}{\partial u^1}\Big|_x,\ldots,\frac{\partial}{\partial u^d}\Big|_x , X_x^1 , \ldots, X_x^d\right) = (\omega(x), X(x)) \in F\mathcal{M} \times \mathbb{R}^d.
\end{equation}

Another choice of coordinate chart produces another local frame $\omega' : U' \to F\mathcal{M}$ and another decomposition \eqref{eq:tangent_vector}, assuming that $x \in U'$. These decompositions are related by a change of basis
\begin{equation}\label{eq:jacobian}
\left( \omega'(x), X'(x)\right) = (\omega(x) B(x) , B(x)^{-1}X(x)),
\end{equation}
for some $B(x) \in GL(d,\mathbb{R})$. Now observe that \eqref{eq:jacobian} is of the form
\begin{equation}
(p' , v') = (p \triangleleft k , \rho(k)^{-1}v),
\end{equation}
where $p,p' \in F\mathcal{M}$, $v,v' \in \mathbb{R}^d$, $k \in GL(d,\mathbb{R})$, the right-action $\triangleleft$ is right-multiplication, and $(\rho,\mathbb{R}^d)$ is the standard representation $\rho(k) = k$ of $GL(d,\mathbb{R})$. The basis-dependent description \eqref{eq:tangent_vector} of $X_x$ thus resemble the pairs $(p,v)$ in the construction of associated bundles. Passing to the quotient $(F\mathcal{M} \times_\rho \mathbb{R}^d)/\sim$ instead gives a basis-\emph{in}dependent description of $X_x$, since it identifies all possible decompositions \eqref{eq:tangent_vector} in all possible bases. That is, the tangent bundle is isomorphic to  $F\mathcal{M} \times_\rho \mathbb{R}^d$.
\end{example}

Equivariant neural networks use the language of principal and associated bundles. In the remainder of this subsection, let $E_\rho = P \times_\rho V_\rho$ and $E_\sigma = P \times_\sigma V_\sigma$ be associated bundles, given a principal bundle $\pi : P \to \mathcal{M}$ over a smooth manifold $\mathcal{M}$. Further let $\Gamma_c(E_\rho)$ and $\Gamma_c(E_\sigma)$ be the vector spaces of compactly supported continuous sections of $E_\rho$ and $E_\sigma$, respectively.

\begin{definition}\label{def:datapoint}
A \emph{data point} is a section $s \in \Gamma_c(E_\rho)$.
\end{definition}

\begin{remark} Even though data is typically real-valued, we primarily consider complex representations $(\rho,V_\rho)$ so to simplify the mathematical theory. The harmonic analysis in Section \ref{subsec:RKHS} especially benefits from this choice.

Our decision to restrict attention to compactly supported sections was also made for mathematical reasons: $G$-equivariant layers are defined in Section \ref{subsec:group} in terms of an induced representation, which lives on the completion of $\Gamma_c(E_\rho)$ with respect to a certain inner product. This is not a serious restriction from an application viewpoint.
\end{remark}

\begin{definition}
A \emph{feature map} is a compactly supported continuous map $f : P \to V_\rho$ that satisfies the transformation property
\begin{equation}\label{eq:feature_map}
f(p \triangleleft k) = \rho(k)^{-1}f(p),
\end{equation}
for all $p \in P$, $k \in K$.
The vector space of such feature maps is denoted $C_c(P;\rho)$
\end{definition}

Data points and feature maps are, in a sense, dual to each other: Each data point in $ \Gamma_c(E_\rho)$ is of the form
\begin {equation}\label{eq:section_iso}
s_f(x) = [p,f(p)],
\end{equation}
for a feature map $f \in C_c(P;\rho)$, where $p \in P_x$ is any element of the fiber at $x \in \mathcal{M}$.  Note that \eqref{eq:section_iso} does not depend on the choice of $p$: Given another element $p' \in P_x$, there exists a unique $k \in K$ such that $p' = p \triangleleft k$ and
\begin{equation}
[p',f(p')] = [p \triangleleft k , f(p\triangleleft k)] = [p \triangleleft k , \rho(k^{-1})f(p)] = [p,f(p)].
\end{equation}
That is, the equivalence class $[p,f(p)]$ only depends on the basepoint $x$.

\begin{lemma}[{\cite[\S 10.12]{kolar2013natural}}]\label{lemma:section_iso}
The linear map $C_c(P;\rho) \to \Gamma_c(E_\rho)$, $f \mapsto s_f$ is a vector space isomorphism.
\end{lemma}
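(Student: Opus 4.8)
The claim is that $f \mapsto s_f$ is a linear isomorphism $C_c(P;\rho) \to \Gamma_c(E_\rho)$. Linearity is immediate from the definition $s_f(x) = [p, f(p)]$ and the vector space structure on the fibers of $E_\rho$, so the real work is constructing a two-sided inverse. The plan is to produce, for each section $s \in \Gamma_c(E_\rho)$, a feature map $\Phi(s) = f_s \in C_c(P;\rho)$ and then verify $\Phi$ and $f \mapsto s_f$ are mutually inverse. Given $p \in P$ with $\pi(p) = x$, the fiber $\pi_\rho^{-1}(\{x\})$ is spanned by the linear isomorphism $\iota_p : V_\rho \to \pi_\rho^{-1}(\{x\})$, $v \mapsto [p,v]$, so I would simply define $f_s(p) := \iota_p^{-1}(s(x)) = \iota_p^{-1}(s(\pi(p)))$, i.e. the unique $v \in V_\rho$ with $[p,v] = s(\pi(p))$.

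The verification proceeds in a few steps. First, the transformation property: for $p \in P_x$, $k \in K$, we have $[p \triangleleft k, \rho(k)^{-1} f_s(p)] = [p, f_s(p)] = s(x)$ by the defining relation $\sim$, so by uniqueness $f_s(p \triangleleft k) = \rho(k)^{-1} f_s(p)$, establishing \eqref{eq:feature_map}. Second, continuity of $f_s$: this is local, so I would work over a trivializing open set $U \subseteq \mathcal{M}$ with a local section (gauge) $\omega : U \to P$ and the associated local trivialization of $E_\rho$; in such coordinates both $s$ and the bundle charts are continuous, and $f_s$ restricted to $\pi^{-1}(U)$ is expressed through these continuous data together with the continuous $K$-action, hence continuous. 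Third, compact support: $\supp f_s = \pi^{-1}(\supp s)$, and since $\supp s$ is compact in $\mathcal{M}$ and $\pi$ is a fiber bundle projection with fiber $K$ compact, $\pi^{-1}(\supp s)$ is compact (a proper map onto its image when the fiber is compact), so $f_s \in C_c(P;\rho)$.

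Finally, the inverse check. For $s \in \Gamma_c(E_\rho)$ and any $x \in \mathcal{M}$, pick $p \in P_x$; then $s_{f_s}(x) = [p, f_s(p)] = s(x)$ by construction, so $s_{f_s} = s$. Conversely, for $f \in C_c(P;\rho)$ and any $p \in P$ with $x = \pi(p)$, the element $f_{s_f}(p)$ is the unique $v$ with $[p,v] = s_f(x) = [p, f(p)]$, hence $f_{s_f}(p) = f(p)$, so $f_{s_f} = f$. Thus $\Phi = (f \mapsto s_f)^{-1}$ and the map is a bijective linear map, i.e. an isomorphism.

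**Main obstacle.** Everything is essentially bookkeeping; the only genuinely substantive point is the continuity of $f_s$ (and the well-definedness of $\iota_p^{-1}$, which is already guaranteed by the stated fact that $v \mapsto [p,v]$ is a linear isomorphism). I would handle continuity by reducing to a local trivialization $\pi^{-1}(U) \cong U \times K$ under which $f_s$ becomes an explicit continuous combination of the local expression of $s$, the transition to the gauge $\omega$, and $\rho$; compact support and the transformation law are then routine. Since this is a standard fact, I expect the author's proof to simply cite \cite[\S 10.12]{kolar2013natural} as indicated, and at most sketch the inverse $s \mapsto f_s$.
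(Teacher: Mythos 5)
Your proposal is correct, and you anticipated the situation accurately: the paper offers no proof of this lemma beyond the citation to Kol\'a\v{r}--Michor--Slov\'ak \S 10.12; the only ingredient it spells out in the surrounding text is the well-definedness of $s_f$ (independence of the choice of $p \in P_x$). Your construction of the inverse $s \mapsto f_s$ via $f_s(p) = \iota_p^{-1}(s(\pi(p)))$, together with the checks of equivariance, continuity in a local trivialization, and the mutual-inverse identities, is exactly the standard argument underlying the cited result. One remark worth making explicit: your compact-support step uses that the fiber $K$ is compact (so that $\pi^{-1}(\supp s)$ is compact), and this is genuinely needed --- if $K$ were non-compact, the transformation property $f(p \triangleleft k) = \rho(k)^{-1}f(p)$ would force the support of any nonzero feature map to contain entire (non-compact) fibers, so $C_c(P;\rho)$ would be trivial and the lemma would fail. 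The lemma is stated in Section 2.2, where $K$ is only assumed to be a Lie group, so your proof quietly supplies the hypothesis (compact $K$) that the paper only imposes from Section 3 onward; it would be worth flagging that assumption rather than leaving it implicit in the properness argument.
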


We are almost ready to define general and gauge equivariant layers. Before doing so, however, we must say how gauge transformations $\chi : P \to P$ act on data points. Let $\theta_\chi : P \to K$ be the uniquely defined map satisfying $\chi(p) = p \triangleleft \theta_\chi(p)$ for all $p \in P$, and define the following action on the associated bundle $E_\rho$:
\begin{equation}
\chi \cdot [p,v] = [\chi(p),v] = [p, \rho(\theta_\chi(p))v], \qquad [p,v] \in E_\rho.
\end{equation}
The corresponding action on data points is given by
\begin{equation}
(\chi  \cdot s_f)(p) = [p , \rho(\theta_\chi(p)) f(p)] = s_{\rho(\theta_\chi)f}(x), \qquad s_f \in \Gamma_c(E_\rho).
\end{equation}
We distinguish between general layers and more specific gauge equivariant layers, as $G$-equivariant layers in GCNNs will only be a special case of the former.
\begin{definition}\label{def:layer}
Let $E_\rho = P \times_\rho V_\rho$ and $E_\sigma = P \times_\sigma V_\sigma$ be associated bundles.
\begin{enumerate}
\item[(i)]  A \emph{(linear) layer} is a linear map $\Phi : \Gamma_c(E_\rho) \to \Gamma_c(E_\sigma)$.
\item[(ii)] A layer $\Phi$ is \emph{gauge equivariant} if, for all gauge transformations $\chi : P \to P$,
\begin{equation}\label{eq:gaugeequivariance}
\Phi \circ \chi = \chi \circ \Phi.
\end{equation}
\end{enumerate}
\end{definition}

In equivariant neural networks, data points are sent through a sequence of layers, which are mixed with non-linear activation functions. Again, we focus on individual layers in this paper, and leave the analysis of equivariant activation functions and multi-layer networks for an upcoming paper \cite{gerken2021geometric}. The fiber bundle-theoretic concepts discussed in this part describe two kinds of equivariant neural networks:
\begin{enumerate}
\item[(i)] Gauge equivariant neural networks, which respect local gauge symmetry and whose layers are gauge equivariant.
\item[(ii)] GCNNs, which respect global translation symmetry in homogeneous $G$-spaces $\mathcal{M}$, and whose layers are $G$-equivariant (Definition \ref{def:Glayer}).
\end{enumerate}

\begin{remark}
Our definition of layers is almost identical to the linear maps in \cite{cheng2019covariance}. The difference is that we focus on compactly supported sections, whereas \cite{cheng2019covariance} use sections that are supported on a single coordinate chart. Also, \cite{cheng2019covariance} investigates the structure of their linear maps under additional assumptions of so-called locality, covariance, and weight-sharing. Covariance is analogous to gauge equivariance in our setting.
\end{remark}

A consequence of Lemma \ref{lemma:section_iso} is that (gauge equivariant) layers $\Phi : \Gamma_c(E_\rho) \to \Gamma_c(E_\sigma)$ induce a unique linear map $\phi : C_c(P;\rho) \to C_c(P;\sigma)$ such that $\Phi s_f = s_{\phi f}$. Writing data points as $s_f = [\cdot,f]$ allows us to also express this relation as $\Phi [\cdot , f] = [\cdot , \phi f]$. We think of $\Phi$ and $\phi$ as two sides of the same coin, and use the name (gauge equivariant) layer for both maps.
\[
\begin{tikzcd}
\Gamma_c(E_\rho) \arrow[dd, shift left=.75ex] \arrow{rr}{\Phi} && \Gamma_c(E_\sigma) \arrow[dd, shift left=.75ex]\\
&\\
C_c(P;\rho) \arrow[uu, shift left=.75ex] \arrow[dashed]{rr}{\phi} && C_c(P;\sigma)\arrow[uu, shift left=.75ex] 
\end{tikzcd}
\]

\begin{example}
Let $T : V_\rho \to V_\sigma$ be a linear transformation and consider the layer
\begin{equation}
\phi : C_c(P;\rho) \to C_c(P;\sigma), \qquad (\phi f)(p) = T\big(f(p)\big),
\end{equation}
for $p \in P$, $f \in C_c(P;\rho) $. Since $f$ and $\phi f$ are feature maps and thereby satisfy \eqref{eq:feature_map}, the linear transformation $T$ must satisfy
\begin{alignat}{1}
\sigma(k)T\big(f(p)\big) = T\big(f(p \triangleleft k^{-1})\big) = T\big( \rho(k) f(p)\big),
\end{alignat}
for all $k \in K, p \in P, f \in C_c(P;\rho)$. This can be seen to imply that $\sigma \circ T = T \circ \rho$, so $T$ intertwines the representations $\rho$ and $\sigma$. Another way to arrive at this conclusion is to analyze when the corresponding layer
\begin{equation}\label{eq:evaluatemorphism}
\Phi : \Gamma_c(E_\rho) \to \Gamma_c(E_\sigma), \qquad \Phi s_f = [\cdot, \phi f],
\end{equation}
is well-defined.

Now consider a gauge transformation $\chi : P \to P$ and its induced map $\theta_\chi : P \to K$. Because $T$ is an intertwiner,
\begin{equation}\label{eq:evaluatemorphism2}
(\Phi \circ \chi)s_f = s_{\phi \rho(\theta_\chi)f} = s_{\sigma(\theta_\chi)\phi f} = (\chi \circ \Phi)s_f,
\end{equation}
hence the layer $\Phi$ is automatically gauge equivariant.
\end{example}

As this example illustrates, gauge equivariance is tightly connected to intertwining properties of $\phi$. Rearranging \eqref{eq:evaluatemorphism2} gives the following result.

\begin{lemma}
A general layer $\Phi : \Gamma_c(E_\rho) \to \Gamma_c(E_\sigma)$ is gauge equivariant iff
\begin{equation}
\phi \circ \rho(\theta_\chi)f = \sigma(\theta_\chi) \circ \phi f,
\end{equation}
for all gauge transformations $\chi : P \to P$ and all feature maps $f \in C_c(P;\rho)$.
\end{lemma}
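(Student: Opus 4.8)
The plan is to unwind both sides of the identity $\Phi \circ \chi = \chi \circ \Phi$ using the dictionary between data points and feature maps provided by Lemma \ref{lemma:section_iso}. First I would record the computation already sketched before Definition \ref{def:layer}: for a gauge transformation $\chi : P \to P$ with associated map $\theta_\chi : P \to K$, the induced action on data points is $\chi \cdot s_f = s_{\rho(\theta_\chi)f}$, where $\rho(\theta_\chi)f$ denotes the map $p \mapsto \rho(\theta_\chi(p))f(p)$. One checks that $\rho(\theta_\chi)f$ is again an element of $C_c(P;\rho)$, so that $s_{\rho(\theta_\chi)f}$ makes sense, using the transformation property \eqref{eq:feature_map} of $f$ together with the equivariance of $\chi$. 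The same formula with $\sigma$ in place of $\rho$ describes the action on $\Gamma_c(E_\sigma)$, so $\chi \cdot s_g = s_{\sigma(\theta_\chi)g}$ for $g \in C_c(P;\sigma)$.

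Next I would invoke Lemma \ref{lemma:section_iso}, which says $f \mapsto s_f$ is a linear isomorphism $C_c(P;\rho) \to \Gamma_c(E_\rho)$: every data point is of the form $s_f$, so $\Phi \circ \chi = \chi \circ \Phi$ on all of $\Gamma_c(E_\rho)$ holds if and only if it holds on $s_f$ for every $f \in C_c(P;\rho)$. Evaluating the left-hand side gives $(\Phi \circ \chi)s_f = \Phi s_{\rho(\theta_\chi)f} = s_{\phi(\rho(\theta_\chi)f)}$ by the defining relation $\Phi s_h = s_{\phi h}$ of the induced map $\phi$; evaluating the right-hand side gives $(\chi \circ \Phi)s_f = \chi \cdot s_{\phi f} = s_{\sigma(\theta_\chi)(\phi f)}$. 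Equality of these two sections for all $f$ is, by the injectivity part of Lemma \ref{lemma:section_iso}, equivalent to $\phi(\rho(\theta_\chi)f) = \sigma(\theta_\chi)(\phi f)$ for all $f \in C_c(P;\rho)$ and all gauge transformations $\chi$, which is exactly the claim.

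There is no genuine obstacle here; the statement is the full-generality version of the rearrangement \eqref{eq:evaluatemorphism2} carried out in the preceding example. The only point deserving a line of care is that the correspondence $\Phi \leftrightarrow \phi$ intertwines the two gauge actions, i.e.\ that transporting the action of $\chi$ through the isomorphism $f \mapsto s_f$ yields precisely the feature-map action $f \mapsto \rho(\theta_\chi)f$; this is immediate from $\chi \cdot [p,v] = [p,\rho(\theta_\chi(p))v]$ together with the identification $s_f = [\cdot,f]$.
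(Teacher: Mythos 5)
Your proof is correct and follows the paper's own route: the paper obtains this lemma by rearranging the computation \eqref{eq:evaluatemorphism2}, i.e.\ by unwinding $\Phi \circ \chi = \chi \circ \Phi$ through the isomorphism $f \mapsto s_f$ of Lemma \ref{lemma:section_iso} and the action $\chi \cdot s_f = s_{\rho(\theta_\chi)f}$, exactly as you do. Your added remark that injectivity of $f \mapsto s_f$ is what converts equality of sections into equality of feature maps is the right point of care and matches the paper's implicit reasoning.
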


This concludes our discussion of gauge theory and of equivariant neural networks. The framework for the latter is evidently very general, consisting of layers and non-linear activation functions between data points. There are advantages of working at this level of generality: Ordinary (non-equivariant) neural networks have a multitude of different types of layers, many of them linear. Equivariant analogues of such layers are likely to satisfy either Definition \hyperref[def:layer]{\ref*{def:layer}(ii)} or Definition \ref{def:Glayer}, depending on the relevant type of equivariance. Any result that can be proven using this general framework, will thus be true for many different instances  of equivariant neural networks. One example is Theorem \ref{thm:main} below, that characterizes the structure of abstract $G$-equivariant layers in any GCNN.

\section{\texorpdfstring{$G$}{G}-equivariant convolutional neural networks}\label{sec:GCNN}

Recall that GCNNs generalize ordinary CNNs to data points $f : \mathcal{M} \to V$ defined on homogeneous $G$-spaces $\mathcal{M}$. Let us give a brief recap on homogeneous spaces and global symmetry, before moving on to discuss homogeneous vector bundles, sections, and induced representations. We will demonstrate that GCNNs and $G$-equivariant layers (originally defined in \cite{cohen2018general}) are most naturally understood from the perspective of homogeneous vector bundles. We then use reproducing kernel Hilbert spaces and bandwidth to understand which $G$-equivariant layers are expressible as convolutional layers.

\subsection{Homogeneous spaces}

\begin{definition}
Let $G$ be a Lie group. A smooth manifold $\mathcal{M}$ is called a \emph{homogeneous $G$-space} if there exists a smooth, transitive left $G$-action
\begin{equation}\label{eq:G_action}
G \times \mathcal{M} \to \mathcal{M}, \qquad (g,x) \mapsto g \cdot x.
\end{equation}
\end{definition}

Since the action \eqref{eq:G_action} is transitive, we may choose an arbitrary basepoint $x_0 \in \mathcal{M}$ and express any other point $x \in \mathcal{M}$ as $x = g \cdot x_0$ for some $g \in G$. This group element is typically not unique, but observe that
\begin{equation}
g \cdot x_0 = g' \cdot x_0 \quad \iff \quad g^{-1}g' \in H_{x_0},
\end{equation}
where $H_{x_0} = \left\{ g \in G \ \middle| \ g \cdot x_0 = x_0\right\}$ is the isotropy group of $x_0$. In other words, there is a one-to-one correspondence between points $x \in \mathcal{M}$ and left cosets $gH_{x_0} \in G/H_{x_0}$.

\begin{proposition}[{\cite[Theorem 21.18]{lee2013smooth}}]\label{prop:homspace}
Let $\mathcal{M}$ be a homogeneous $G$-space and choose a basepoint $x_0 \in \mathcal{M}$. The isotropy group $H_{x_0}$ is a closed subgroup of $G$, and the map
\begin{equation}\label{eq:equiv_diffeo}
F_{x_0} : G/H_{x_0} \to \mathcal{M} , \qquad g H_{x_0} \mapsto g \cdot x_0,
\end{equation}
is an equivariant diffeomorphism.
\end{proposition}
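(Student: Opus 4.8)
The plan is to split the statement into three parts: that $H_{x_0}$ is a closed subgroup, that $F_{x_0}$ is a well-defined equivariant bijection, and that it is a diffeomorphism. The first two parts are essentially bookkeeping, whereas the third relies on two structural inputs — the closed subgroup theorem and the global rank theorem.

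First I would consider the orbit map $\theta : G \to \mathcal{M}$, $\theta(g) = g \cdot x_0$, which is smooth since it is the composition of $g \mapsto (g,x_0)$ with the smooth action \eqref{eq:G_action}. Because $H_{x_0} = \theta^{-1}(\{x_0\})$ and $\mathcal{M}$ is Hausdorff, $H_{x_0}$ is closed in $G$; that it is a subgroup is immediate. By Cartan's closed subgroup theorem, $H_{x_0}$ is then an embedded Lie subgroup, so the quotient $G/H_{x_0}$ carries a unique smooth manifold structure for which the canonical projection $\pi : G \to G/H_{x_0}$ is a smooth submersion.

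Next I would check that $F_{x_0}$ is well-defined, equivariant, and bijective. Well-definedness and injectivity both follow from the coset computation already recorded above, namely $g \cdot x_0 = g' \cdot x_0 \iff g^{-1}g' \in H_{x_0}$; surjectivity is transitivity of the action; and equivariance is the identity $F_{x_0}(g' \cdot gH_{x_0}) = (g'g)\cdot x_0 = g' \cdot (g \cdot x_0)$. Smoothness of $F_{x_0}$ then comes for free: we have $\theta = F_{x_0} \circ \pi$ with $\theta$ smooth and $\pi$ a surjective smooth submersion, so the universal property of surjective submersions upgrades $F_{x_0}$ to a smooth map.

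Finally, to promote the smooth bijection $F_{x_0}$ to a diffeomorphism I would use a rank argument. Since $F_{x_0}$ intertwines the transitive $G$-actions on $G/H_{x_0}$ and $\mathcal{M}$, and these actions are by diffeomorphisms, the rank of $dF_{x_0}$ is the same at every point: translating one point to another by a group element conjugates the two differentials. Hence $F_{x_0}$ has constant rank. A constant-rank map that is injective is an immersion, and one that is surjective is a submersion; being bijective, $F_{x_0}$ is both, so $\dim G/H_{x_0} = \dim \mathcal{M}$ and $F_{x_0}$ is a bijective local diffeomorphism, hence a diffeomorphism. The main obstacle is not any single calculation but correctly marshalling the two structural ingredients — the closed subgroup theorem to give $G/H_{x_0}$ its smooth structure, and the global rank theorem to pass from "smooth equivariant bijection" to "diffeomorphism"; everything else is routine.
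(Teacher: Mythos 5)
Your proof is correct and follows essentially the same route as the paper's cited source (Lee, Theorem~21.18), which the paper invokes without reproducing: closed subgroup theorem for the smooth structure on $G/H_{x_0}$, the universal property of the submersion $\pi$ for smoothness of $F_{x_0}$, and the equivariant/global rank theorems to upgrade the smooth bijection to a diffeomorphism. No gaps.
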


Homogeneous spaces are globally symmetric in the sense that any point $x_0 \in \mathcal{M}$ may be chosen as basepoint. Given another choice of basepoint $x_0' \in \mathcal{M}$, the spaces $G/H_{x_0'} \simeq G/H_{x_0}$ are diffeomorphically related by a translation in $G$ - more precisely, by the composition $F_{x_0}^{-1} \circ F_{x_0'}$. Euclidean space $\mathcal{M} = \mathbb{R}^d$, for example, possesses a global translation symmetry, allowing any point to be considered as origin. Similarly, the rotationally symmetric sphere $\mathcal{M} = S^2$ does not have a unique north pole.

We end this part with the following proposition, which is instrumental in relating homogeneous vector bundles to the equivariance framework in Section \ref{subsec:ENN}.

\begin{proposition}[{\cite[\S 7.5]{steenrod1960topology}}]
Let $G$ be a Lie group and let $H \leq G$ be a closed subgroup. Then the quotient map
\begin{equation}
q  : G \to G/H, \qquad g \mapsto gH,
\end{equation}
defines a smooth principal $H$-bundle over the homogeneous $G$-space $\mathcal{M} = G/H$.
\end{proposition}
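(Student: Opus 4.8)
The plan is to verify the two conditions in Definition~\ref{def:principal_bundle} (with the structure group there being $H$), together with the underlying requirement that $q$ is a smooth fiber bundle. First I would record the standing facts: since $H$ is closed, Cartan's closed subgroup theorem makes $H$ an embedded Lie subgroup of $G$; the right multiplication action $G \times H \to G$, $(g,h) \mapsto gh$, is smooth, free, and proper; and hence, by the quotient manifold theorem (equivalently, by the smooth structure underlying Proposition~\ref{prop:homspace} applied to $\mathcal{M} = G/H$), the quotient $G/H$ carries a unique smooth structure for which $q : G \to G/H$ is a surjective smooth submersion. The candidate right $H$-action on the total space $P = G$ is $g \triangleleft h = gh$; it is smooth as a restriction of the group multiplication and free because $gh = g$ forces $h = e$.

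Next I would check conditions \hyperref[def:principal_bundle]{\ref*{def:principal_bundle}(i)--(ii)}. The fiber of $q$ over $gH$ is exactly the left coset $gH$, and $q(gh) = ghH = gH$, so the action preserves fibers, giving (i). For fixed $g \in G$, the map $h \mapsto gh$ is the restriction to $H$ of the left translation diffeomorphism $L_g : G \to G$, which carries $H$ bijectively onto $gH$; since $H$ is an embedded submanifold of $G$ and $gH = q^{-1}(\{gH\})$ is an embedded submanifold (a fiber of a submersion), this restriction is a diffeomorphism $H \to gH$ with smooth inverse $L_{g^{-1}}|_{gH}$, which is condition (ii).

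The substantive step is local triviality. Because $q$ is a submersion, every point of $G/H$ has an open neighborhood $U$ admitting a smooth local section $\sigma : U \to G$ with $q \circ \sigma = \Id_U$. Define
\begin{equation}
\Psi_U : U \times H \to q^{-1}(U), \qquad \Psi_U(x,h) = \sigma(x)h .
\end{equation}
It is smooth, satisfies $q \circ \Psi_U = \mathrm{pr}_1$, and is $H$-equivariant for the right $H$-action on the second factor of $U \times H$. Its inverse should be $g \mapsto \big(q(g), \sigma(q(g))^{-1}g\big)$: this is well-defined because $q(g) \in U$ forces $\sigma(q(g))$ and $g$ into the same fiber, so $\sigma(q(g))^{-1}g \in H$; it is smooth as a map into $G$, and hence smooth as a map into $H$ precisely because $H$ is embedded in $G$. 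Checking that the two composites are identities then shows $\Psi_U$ is a diffeomorphism. Since such $U$ cover $G/H$, the map $q$ is a smooth fiber bundle, and combined with the previous paragraph it is a principal $H$-bundle.

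I expect the main obstacle to be the local triviality step — specifically, producing the local sections $\sigma$ of $q$ and confirming that the resulting $\Psi_U$ has a smooth inverse. Both hinge on structural facts that must be invoked carefully: that a smooth submersion admits local sections (the constant-rank/implicit function theorem in charts adapted to $q$), and that $H$ being closed makes it an embedded Lie subgroup, which is what legitimizes treating the cosets $gH$ as embedded submanifolds and upgrading ``smooth into $G$'' to ``smooth into $H$''. If one takes the smooth structure on $G/H$ and the submersivity of $q$ as given, the remaining verifications are routine.
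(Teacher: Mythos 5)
Your proof is correct and complete: the paper itself offers no argument for this proposition (it is simply quoted from \cite[\S 7.5]{steenrod1960topology}), and your verification---closedness of $H$ giving an embedded Lie subgroup, the quotient manifold theorem making $q$ a submersion, the fiberwise diffeomorphisms $h \mapsto gh$, and local triviality via local sections $\sigma$ with $\Psi_U(x,h) = \sigma(x)h$---is the standard modern route to this result. The only facts you invoke without detailed proof (properness of the right $H$-action and the existence of local sections of a surjective submersion) are standard and are appropriately flagged as the load-bearing inputs.
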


\subsection{Homogeneous vector bundles}\label{subsec:hombundle} Vector bundles may inherit global symmetry from a homogeneous base space; the transitive action $(g,x) \mapsto g\cdot x$ may induce linear maps $E_x \mapsto E_{gx}$ between fibers. Such bundles are naturally called \emph{homogeneous} and, because this symmetry is also encoded in its sections (data points), we will show that homogeneous vector bundles is the natural setting for studying GCNNs.

From this point on, we restrict attention to homogeneous spaces $\mathcal{M} = G/K$ where $G$ is a unimodular Lie group and $K \leq G$ is a compact subgroup. Elements of the homogeneous space is interchangably denoted as $x \in \mathcal{M}$ or $gK \in G/K$.

\begin{remark}
Examples of unimodular Lie groups include all finite, discrete, compact, or abelian Lie groups, the Euclidean groups, and many others. See \cite{folland2016course,fuhr2005abstract} for details.
\end{remark}

\begin{definition}[{\cite[5.2.1]{wallach2018harmonic}}]\label{def:hombundle}
Let $\mathcal{M}$ be a homogeneous $G$-space and let $\pi : E \to \mathcal{M}$ be a smooth vector bundle with fibers $E_x$. We say that $E$ is \emph{homogeneous} if there is a smooth left $G$-action $G \times E \to E$ satisfying
\begin{equation}\label{eq:bundleaction}
g \cdot E_x = E_{gx},
\end{equation}
and such that the induced map $L_{g,x} : E_x \to E_{gx}$ is linear, for all $g \in G, x \in \mathcal{M}$.
\end{definition}

\begin{example}\label{ex:FM}
The frame bundle $F\mathcal{M}$ is a homogeneous vector bundle whenever $\mathcal{M}$ is a homogeneous space, and the same is true of any associated bundle $F\mathcal{M} \times_\rho V_\rho$. In particular, the tangent bundle $T\mathcal{M}$ is a homogeneous vector bundle.
\end{example}

\begin{example}
If $(\rho,V_\rho)$ is a finite-dimensional $K$-representation, then the associated bundle $E_\rho = G \times_\rho V_\rho$ is a homogeneous vector bundle with respect to the left action
\begin{equation}
g \cdot [g', v] = [gg', v].
\end{equation}
\end{example}

All homogeneous vector bundles $E$ are of the form $G \times_\rho V_\rho$, up to isomorphism. To understand why, consider the fiber $E_K = E_{eK}$ and observe that the restriction of \eqref{eq:bundleaction} to $E_K$ and elements $k \in K$ yields invertible linear maps
\begin{equation}\label{eq:homreps}
L_{k} : E_K \to E_K.
\end{equation}
The defining properties of group actions ensure that $\rho(k) = L_{k}$ is a finite-dimensional $K$-representation on $E_K$. Moreover, because the linear maps $L_{g,x}$ are isomorphisms, any element $v'$ of any fiber $E_x$ can be obtained as the image $v' = L_{g,K}(v) =: L_g(v)$ for some choices of $g \in q^{-1}(\{x\})$ and $v \in E_K$. The mapping
\begin{equation}
\xi : G \times E_K \to E, \qquad (g,v) \mapsto L_g(v),
\end{equation}
is thus surjective. It is not injective, though, since the relation
\begin{equation}
L_{g} = L_{g} \circ L_{k} \circ L_{k^{-1}} = L_{gk} \circ \rho(k^{-1}),
\end{equation}
implies that $\xi(g,v) = L_g(v) = L_{gk}( \rho(k^{-1})v) = \xi(gk, \rho(k^{-1})v)$ for $k \in K$. However, the same argument shows that $\xi$ is made injective by passing to the quotient $G \times_\rho E_K$.

\begin{lemma}[{\cite[5.2.3]{wallach2018harmonic}}]\label{lemma:hombundle}
The map
\begin{equation}
 G \times_\rho E_K \to E, \qquad [g,v] \mapsto L_{g}(v),
\end{equation}
is an isomorphism of homogeneous vector bundles.
\end{lemma}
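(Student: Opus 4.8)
The plan is to show directly that the assignment $\Xi\colon [g,v]\mapsto L_g(v)$, with the shorthand $L_g := L_{g,K}\colon E_K\to E_{gK}$ from before the statement, is a bijective, $G$-equivariant, smooth bundle map covering $\mathrm{id}_{\mathcal{M}}$, and then to let a standard lemma upgrade it to an isomorphism of vector bundles. \emph{Well-definedness} is exactly the displayed computation preceding the statement: for $k\in K$ we have $L_{gk}\big(\rho(k)^{-1}v\big) = L_{gk}\circ L_{k^{-1}}(v) = L_g(v)$, so $\Xi$ does not depend on the chosen representative of $[g,v]$. It covers the identity because $L_{g,K}$ sends the fibre $E_K = E_{eK}$ into $E_{gK}$, so that $\pi\big(\Xi([g,v])\big) = gK = \pi_\rho([g,v])$; and over a fixed point $gK$ it is the map $v\mapsto L_g(v)$, which is linear since each $L_{g,x}$ is. Hence $\Xi$ is a homomorphism of vector bundles over $\mathcal{M} = G/K$.

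For bijectivity, surjectivity was already noted: since the $L_{g,x}$ are linear isomorphisms (their inverses being the $L_{g^{-1},gx}$, by the group-action axioms), every $v'\in E_x$ equals $L_g(v)$ for some $g\in q^{-1}(\{x\})$ and $v\in E_K$. For injectivity, suppose $\Xi([g,v]) = \Xi([g',v'])$. Both sides lie in a single fibre, so $gK = g'K$ and we may write $g' = gk$ with $k\in K$. Then, using $L_{gk,K} = L_{g,K}\circ L_{k,K}$ (the cocycle identity together with $kK = K$) and $L_{k,K} = \rho(k)$, we get $L_g(v) = L_{gk}(v') = L_{g,K}\big(\rho(k)v'\big) = L_g\big(\rho(k)v'\big)$, and injectivity of $L_{g,K}$ forces $v = \rho(k)v'$, i.e. $v' = \rho(k)^{-1}v$. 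Therefore $(g',v') = \big(gk,\rho(k)^{-1}v\big)\sim (g,v)$, so $[g',v'] = [g,v]$.

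Finally, for smoothness, observe that the map $G\times E_K\to E$, $(g,v)\mapsto g\cdot v$, is the restriction to $G\times E_K$ of the smooth left action $G\times E\to E$ of Definition \ref{def:hombundle}, hence smooth, and it is constant on the fibres of the projection $G\times E_K\to G\times_\rho E_K$ onto the associated bundle, which is a surjective submersion (the defining $K$-action is free and proper since $K$ is compact); therefore $\Xi$ descends to a smooth map. Equivariance is immediate: with $g\cdot[g',v] = [gg',v]$ on the associated bundle, $\Xi([gg',v]) = L_{gg',K}(v) = L_{g,\,g'K}\big(L_{g',K}(v)\big) = g\cdot\Xi([g',v])$, the middle equality again being the cocycle identity coming from associativity of the $G$-action on $E$. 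The only genuinely non-routine point is the very last upgrade: one should invoke the standard fact that a morphism of vector bundles over the identity which is bijective on each fibre is automatically a vector bundle isomorphism (check it in local trivialisations if desired), rather than appeal to any inverse function theorem argument; everything else is bookkeeping with the defining relations of $G\times_\rho E_K$ and of the action on $E$.
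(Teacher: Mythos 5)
Your proof is correct and follows the same route the paper sketches in the discussion immediately preceding the lemma (surjectivity of $\xi$ and descent to the quotient via the relation $L_{g} = L_{gk}\circ\rho(k^{-1})$); the paper itself delegates the remaining details to the cited reference, and your argument fills in exactly those details (injectivity, smoothness via the submersion $G\times E_K\to G\times_\rho E_K$, equivariance, and the standard fiberwise-bijection criterion) in the expected way.
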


We now have two perspectives on bundles $G \times_\rho V_\rho$: As bundles associated to the principal bundle $P = G$, and as homogeneous vector bundles (up to isomorphism). The former perspective offers a connection to the framework in Section \ref{subsec:ENN}, whereas the latter motivates the definition of $G$-equivariant layers in Section \ref{subsec:group} below.

\subsection{Induced representations}\label{subsec:indrep} Let us show the relationship between homogeneous vector bundles and induced representations, which will be an essential ingredient in the definition of $G$-equivariant layers. To this end, let $(\rho, V_\rho)$ be a finite-dimensional unitary $K$-representation and consider the homogeneous vector bundle $E_\rho = G \times_\rho V_\rho$.

We will need inner products on $\Gamma_c(E_\rho)$ and $C_c(G;\rho)$, the former of which is defined using the following unitary structure:

\begin{lemma}[{\cite[5.2.7]{wallach2018harmonic}}]
The unitary structure
\begin{equation}\label{eq:unitarystructure}
\langle [g,v] , [g,w]\rangle_{gK} := \langle v, w\rangle_\rho,
\end{equation}
defines a complete inner product on each fiber $E_{gK}$, making $E_\rho$ into a Hilbert bundle with $L_{g,x}$ unitary. This unitary structure is unique in that, if we identify $V_\rho$ with $E_{K}$ in the canonical manner, then the inner product on $V_\rho$ so induced agrees with $\langle \ , \ \rangle_\rho$.
\end{lemma}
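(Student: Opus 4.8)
The plan is to check the four assertions — well-definedness of \eqref{eq:unitarystructure}, the Hilbert-bundle structure, unitarity of the maps $L_{g,x}$, and the characterization by the restriction to $E_K$ — in that order, the only genuinely substantive point being the first. For well-definedness I would observe that any two presentations of the same element of the fiber $E_{gK}$ are related by $[g,v] = [gk,\rho(k)^{-1}v]$ for some $k \in K$, so it suffices that $\langle \rho(k)^{-1}v, \rho(k)^{-1}w\rangle_\rho = \langle v,w\rangle_\rho$; this is exactly the unitarity of $\rho$. (One also checks independence of the choice of $g$ within the coset $gK$ the same way.) Granting this, for each fixed choice of $g$ the map $V_\rho \to E_{gK}$, $v \mapsto [g,v]$, is a linear isomorphism transporting $\langle\cdot,\cdot\rangle_\rho$ to $\langle\cdot,\cdot\rangle_{gK}$, so sesquilinearity, Hermitian symmetry and positive-definiteness are inherited; and since $\dim V_\rho < \infty$, the resulting inner product space $E_{gK}$ is automatically complete.

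Next I would upgrade this fiberwise data to a smooth Hermitian metric, i.e. show $E_\rho$ is a Hilbert bundle. Since $q : G \to G/K$ is a principal $K$-bundle it admits smooth local sections $\omega : U \to G$ over the members $U$ of an open cover of $G/K$, and each such section induces a local trivialization $U \times V_\rho \to E_\rho$, $(x,v) \mapsto [\omega(x),v]$. In this trivialization the fiber inner product \eqref{eq:unitarystructure} is identically $\langle v,w\rangle_\rho$, independent of $x$, hence visibly smooth, so the metric is smooth globally. For unitarity of $L_{g,x}$: these maps are linear isomorphisms by Definition \ref{def:hombundle}, and since $L_{g,x}[g',v] = [gg',v]$, one computes $\langle [gg',v],[gg',w]\rangle_{gg'K} = \langle v,w\rangle_\rho = \langle [g',v],[g',w]\rangle_{g'K}$, so $L_{g,x}$ preserves inner products.

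Finally, the canonical identification $V_\rho \xrightarrow{\sim} E_K$ is $v \mapsto [e,v]$, and under it \eqref{eq:unitarystructure} reads literally $\langle v,w\rangle_\rho$; this proves the compatibility claim, and also the stated uniqueness, since if $\langle\cdot,\cdot\rangle'$ is any fiber metric making all $L_{g,x}$ unitary and restricting to $\langle\cdot,\cdot\rangle_\rho$ on $E_K$, then transporting along the unitary $L_{g,K} : E_K \to E_{gK}$ forces $\langle [g,v],[g,w]\rangle'_{gK} = \langle v,w\rangle_\rho$ on every fiber, i.e. $\langle\cdot,\cdot\rangle' = \langle\cdot,\cdot\rangle$. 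I do not anticipate a real obstacle: all the content is packed into the unitarity of $\rho$ (for well-definedness) and the existence of local sections of $q$ (for smoothness), so the hard part, such as it is, is purely organizational — keeping the representative $g$ separate from the point $gK$ throughout the argument.
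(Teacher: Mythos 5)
The paper gives no proof of this lemma---it is quoted verbatim from Wallach \cite[5.2.7]{wallach2018harmonic}---so there is no in-paper argument to compare against; your proposal is the standard proof and is correct: well-definedness of \eqref{eq:unitarystructure} reduces to unitarity of $\rho$ applied to the relation $[g,v]=[gk,\rho(k)^{-1}v]$, completeness to $\dim V_\rho<\infty$, smoothness of the fiber metric to the existence of local sections of $q:G\to G/K$, and unitarity of $L_{g,x}$ together with the compatibility on $E_K$ are immediate computations. No gaps.
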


We also need the following measure on $G/K$:

\begin{theorem}[Quotient Integral Formula {\cite[\S 1.5]{deitmar2014principles}}]
There is a unique $G$-invariant, nonzero Radon measure $\dd x$ on $G/K$ such that the following \emph{quotient integral formula} holds for every $f \in C_c(G)$:
\begin{equation}\label{eq:qif}
\int_G f(g) \ \dd g = \int_{G/K} \int_K f(xk) \ \dd k \ \dd x.
\end{equation}
\end{theorem}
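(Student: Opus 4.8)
The plan is to follow Weil's classical argument and realize $\dd x$ via the Riesz representation theorem, applied to a positive functional on $C_c(G/K)$ manufactured from the Haar integral on $G$. Write $q : G \to G/K$ for the (open, continuous) quotient map and, for $f \in C_c(G)$, set
\[
f^K(gK) := \int_K f(gk) \ \dd k .
\]
Because Haar measure on the compact group $K$ is bi-invariant, $f^K$ does not depend on the choice of representative $g$, and since $\supp f \cdot K$ is compact (here compactness of $K$ is essential) we have $f^K \in C_c(G/K)$. With this notation the claimed identity \eqref{eq:qif} reads $\int_G f \ \dd g = \int_{G/K} f^K \ \dd x$.

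First I would prove the surjectivity lemma: the linear map $T : C_c(G) \to C_c(G/K)$, $T f = f^K$, is onto, and every nonnegative $\varphi \in C_c(G/K)$ admits a nonnegative preimage. Given $\varphi$, choose $\psi \in C_c(G)$ with $\psi \geq 0$ and $\psi^K > 0$ on the compact set $\supp \varphi$ — such $\psi$ is obtained by covering $\supp\varphi$ with finitely many images of bump functions — and put $f := \psi \cdot (\varphi \circ q)/(\psi^K \circ q)$, interpreted as $0$ wherever $\varphi \circ q$ vanishes. Then $f \in C_c(G)$, $f \geq 0$, and a direct computation gives $T f = \varphi$.

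The heart of the matter, and the place where unimodularity of $G$ enters, is to show that $\int_G f \ \dd g$ depends on $f$ only through $T f$; equivalently, that $T f = 0$ forces $\int_G f \ \dd g = 0$. Assume $f^K = 0$, take an arbitrary $\psi \in C_c(G)$, and compute the double integral $\int_G \int_K \psi(g) f(gk) \ \dd k \ \dd g$ in two ways. Doing the $k$-integral first gives $\int_G \psi(g) f^K(gK) \ \dd g = 0$. Doing the $g$-integral first, Fubini plus the substitution $g \mapsto g k^{-1}$ — valid precisely because $G$ is unimodular, so its left Haar measure is also right-invariant — plus bi-invariance of Haar measure on $K$ turns it into $\int_G f(g) \psi^K(gK) \ \dd g$. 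Hence $\int_G f(g)\psi^K(gK) \ \dd g = 0$ for every $\psi$; applying the surjectivity lemma to pick $\psi$ with $\psi^K \equiv 1$ on the compact set $q(\supp f)$ then yields $\int_G f \ \dd g = 0$.

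It follows that there is a well-defined linear functional $I$ on $C_c(G/K)$ with $I(T f) = \int_G f \ \dd g$; it is positive by the nonnegative-preimage clause of the surjectivity lemma and nonzero because the Haar integral on $G$ is. The Riesz representation theorem produces a nonzero Radon measure $\dd x$ on $G/K$ with $\int_{G/K} \varphi \ \dd x = I(\varphi)$, which is exactly \eqref{eq:qif}; its $G$-invariance follows from left-invariance of Haar measure on $G$ together with the compatibility of $T$ with left translations. Uniqueness is then immediate: any $G$-invariant nonzero Radon measure $\dd\mu$ satisfying \eqref{eq:qif} obeys $\int_{G/K}\varphi \ \dd\mu = \int_G f \ \dd g = \int_{G/K}\varphi \ \dd x$ for all $\varphi = T f$, hence for all $\varphi \in C_c(G/K)$ by surjectivity of $T$, so $\dd\mu = \dd x$. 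I expect the two-way evaluation of the double integral in the third paragraph to be the main obstacle — specifically, tracking that the change of variables is licensed exactly by unimodularity of $G$, and that compactness of $K$ (hence its unimodularity and the finiteness of its Haar measure) is what keeps $f^K$ in $C_c(G/K)$ and makes the convention-laden preimage formula legitimate.
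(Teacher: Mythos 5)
The paper does not prove this theorem itself---it states it with a citation to \cite[\S 1.5]{deitmar2014principles}---and your argument is precisely the classical Weil quotient-integration proof carried out there: surjectivity of $f \mapsto f^K$ with nonnegative preimages, the vanishing lemma via the two-way evaluation of the double integral (where unimodularity of $G$ and compactness of $K$ enter exactly as you say), and then Riesz representation plus surjectivity for existence, invariance, and uniqueness. The proposal is correct and takes the same route as the cited source.
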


Using these two ingredients, we make $\Gamma_c(E_\rho)$ into a pre-Hilbert space with respect to the inner product
\begin{equation}\label{eq:innerprod1}
\langle s,s'\rangle_{L^2(E_\rho)} := \int_{G/K} \langle s(x), s'(x)\rangle_x \ \dd x, \qquad s,s' \in \Gamma_c(E_\rho),
\end{equation}
and we denote its completion $L^2(E_\rho)$. Similarly, $C_c(G;\rho)$ is a pre-Hilbert space with respect to the inner product
\begin{equation}\label{eq:innerprod2}
\langle f, f' \rangle_{L^2(G;\rho)} = \int_G \langle f(g), f'(g) \rangle_\rho \ \dd g, \qquad f,f' \in C_c(G;\rho),
\end{equation}
the completion of which is denoted $L^2(G;\rho)$.

\begin{definition}\label{def:indreps}
The $G$-representations
\begin{alignat}{3}
&\mathrm{ind}_K^G\rho(g) : L^2(E_\rho) \to L^2(E_\rho), \qquad &&(\mathrm{ind}_K^G\rho(g) s)(x) &&= g\cdot s(g^{-1}x),\\
&\mathrm{Ind}_K^G\rho(g) : L^2(G;\rho) \to L^2(G;\rho), \qquad &&(\mathrm{Ind}_K^G\rho(g) f)(g') &&= f(g^{-1}g').
\end{alignat}
are called \emph{induced representations}, or \emph{representations induced by $\rho$}.
\end{definition}
Both $\mathrm{ind}_K^G \rho$ and $\mathrm{Ind}_K^G \rho$ are unitary \cite[5.3.2]{wallach2018harmonic} and may be identified:

\begin{lemma}
The induced representations $\mathrm{ind}_K^G(\rho)$, $\mathrm{Ind}_K^G(\rho)$ are unitarily equivalent.
\end{lemma}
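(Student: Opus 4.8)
The plan is to use the isomorphism $C_c(G;\rho) \to \Gamma_c(E_\rho)$ from Lemma \ref{lemma:section_iso} (with $P = G$), extend it to an isometry of the completions $L^2(G;\rho) \to L^2(E_\rho)$, and then check that this isometry intertwines $\mathrm{Ind}_K^G\rho$ with $\mathrm{ind}_K^G\rho$. Concretely, recall that the isomorphism sends a feature map $f \in C_c(G;\rho)$ to the section $s_f$ defined by $s_f(gK) = [g, f(g)]$; this is well-defined precisely because of the transformation property \eqref{eq:feature_map} and the equivalence relation defining $E_\rho = G \times_\rho V_\rho$.

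First I would verify that $f \mapsto s_f$ is isometric for the inner products \eqref{eq:innerprod1} and \eqref{eq:innerprod2}. By the unitary structure \eqref{eq:unitarystructure}, $\langle s_f(gK), s_{f'}(gK)\rangle_{gK} = \langle f(g), f'(g)\rangle_\rho$; the only subtlety is that the left-hand side is genuinely a function on $G/K$ while $g \mapsto \langle f(g), f'(g)\rangle_\rho$ is a function on $G$. But this latter function is $K$-right-invariant, since $f, f'$ are feature maps and $\rho$ is unitary: $\langle f(gk), f'(gk)\rangle_\rho = \langle \rho(k)^{-1}f(g), \rho(k)^{-1}f'(g)\rangle_\rho = \langle f(g), f'(g)\rangle_\rho$. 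Hence it descends to $G/K$, and applying the quotient integral formula \eqref{eq:qif} (with the inner normalization of Haar measure on the compact group $K$, so that $\int_K \dd k = 1$) gives
\begin{equation}
\langle f, f'\rangle_{L^2(G;\rho)} = \int_G \langle f(g), f'(g)\rangle_\rho \ \dd g = \int_{G/K}\langle s_f(x), s_{f'}(x)\rangle_x \ \dd x = \langle s_f, s_{f'}\rangle_{L^2(E_\rho)}.
\end{equation}
Thus $f \mapsto s_f$ is a linear isometry with dense image (its image contains $\Gamma_c(E_\rho)$, which is dense in $L^2(E_\rho)$ by definition), so it extends uniquely to a unitary $U : L^2(G;\rho) \to L^2(E_\rho)$.

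Finally I would check the intertwining relation $U \circ \mathrm{Ind}_K^G\rho(g) = \mathrm{ind}_K^G\rho(g) \circ U$ on the dense subspace $C_c(G;\rho)$, which suffices by continuity. For $f \in C_c(G;\rho)$, $\mathrm{Ind}_K^G\rho(g)f$ is the feature map $g' \mapsto f(g^{-1}g')$, so $U(\mathrm{Ind}_K^G\rho(g)f)$ is the section $x \mapsto [g', f(g^{-1}g')]$ where $g' \in q^{-1}(x)$; taking $g' = g g''$ with $q(g'') = g^{-1}x$ this equals $[g g'', f(g'')] = g \cdot [g'', f(g'')] = g \cdot s_f(g^{-1}x) = (\mathrm{ind}_K^G\rho(g) s_f)(x)$, using the left $G$-action on $E_\rho$ given by $g \cdot [g'', v] = [gg'', v]$. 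This is exactly $(\mathrm{ind}_K^G\rho(g) \, U f)(x)$, so the two representations are unitarily equivalent via $U$. I do not expect a serious obstacle here; the only point requiring care is the bookkeeping around which normalization of Haar measure on $K$ makes the isometry exact (rather than merely bounded), and confirming well-definedness of the section $x \mapsto [g', f(g^{-1}g')]$ independently of the chosen representative $g' \in q^{-1}(x)$, which again follows from \eqref{eq:feature_map}.
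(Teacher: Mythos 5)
Your proposal is correct and follows essentially the same route as the paper's proof: establish that $f \mapsto s_f$ is an isometry via the quotient integral formula, the unitarity of $\rho$, and the normalization of Haar measure on the compact group $K$, then extend to the completions and verify the intertwining relation on the dense subspace. Your write-up is, if anything, slightly more careful about the well-definedness of the descended integrand and the representative-independence in the intertwining computation.
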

\begin{proof}
This is \cite[5.3.4]{wallach2018harmonic}, but let us write down a proof for clarity. First observe that the isomorphism $C_c(G;\rho) \to \Gamma_c(E_\rho)$, $f \mapsto s_f$ is unitary, which follows by combining the quotient integral formula \eqref{eq:qif}, the unitarity of $\rho$, and the compactness of $K$: For all $f,f' \in C_c(G;\rho)$, the map $g \mapsto \langle f(g) , f'(g)\rangle_\rho$ lies in $C_c(G)$ and so
\begin{equation}
\begin{aligned}
\langle f, f'\rangle_{L^2(G;\rho)} &= \int_{G/K} \int_K \langle f(xk) , f'(xk)\rangle_\rho \ \dd k \ \dd x\\
&= \int_{G/K} \langle f(x) , f'(x)\rangle_\rho \ \dd x\\
&= \int_{G/K} \langle [x, f(x)], [x, f'(x)]\rangle_x \ \dd x = \langle s_f, s_{f'}\rangle_{L^2(E_\rho)}.
\end{aligned}
\end{equation}
The same map $f \mapsto s_f$ satisfies
\begin{equation}
\left(\mathrm{ind}_K^G(\rho) s_f\right)(x) = g \cdot s_f(g^{-1}x) = [x, f(g^{-1}x)] = s_{\mathrm{Ind}_K^G(\rho) f}(x),
\end{equation}
so it extends to a unitary isomorphism $L^2(G;\rho) \to L^2(E_\rho)$ intertwining the induced representations.
\end{proof}

To gain a better understanding of the induced representations, consider the Bochner space $L^2(G,V)$, the space of square-integrable functions $f : G \to V$ that take values in a finite-dimensional Hilbert space $V$. It is itself a Hilbert space with inner product
\begin{equation}
\langle f, f' \rangle_{L^2(G,V)} = \int_G \langle f(g), f'(g)\rangle_V \ \dd g'.
\end{equation}
The induced representation $(\mathrm{Ind}_K^G \rho, L^2(G;\rho))$ is nothing but the restriction of the left regular representation $\Lambda$ on $L^2(G,V_\rho)$ to a closed, invariant subspace. Furthermore, $\Lambda$ is intimately related to the left regular representation $\lambda$ on $L^2(G)$, as the following lemma shows. The proof of this lemma is a short calculation. 

\begin{lemma}\label{lemma:L2equiv}
Let $V$ be a finite-dimensional Hilbert space and equip $L^2(G) \otimes V$ with the tensor product inner product. Then the natural unitary isomorphism
\begin{equation}\label{eq:unitary_equiv}
\begin{aligned}
A : L^2(G) \otimes V &\to L^2(G,V)\\ f \otimes v &\mapsto fv
\end{aligned}
\end{equation}
intertwines $\lambda \otimes \Id_V$ with $\Lambda$.
\end{lemma}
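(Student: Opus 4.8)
The plan is to verify directly that the map $A$ in \eqref{eq:unitary_equiv} is a well-defined unitary isomorphism and that it conjugates $\lambda \otimes \Id_V$ into $\Lambda$. Since both claims are essentially bookkeeping, the proof amounts to unwinding definitions on the dense subspace spanned by simple tensors $f \otimes v$ and then invoking continuity.

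First I would address well-definedness and the isometry property simultaneously. Fix an orthonormal basis $e_1,\ldots,e_{\dim V}$ of $V$. Any element of $L^2(G) \otimes V$ can be written uniquely as $\sum_i f_i \otimes e_i$ with $f_i \in L^2(G)$, and $A$ sends it to the function $g \mapsto \sum_i f_i(g) e_i$, whose square-norm is $\int_G \sum_i |f_i(g)|^2 \, \dd g = \sum_i \|f_i\|_{L^2(G)}^2$, which equals the tensor-product norm of $\sum_i f_i \otimes e_i$. This shows $A$ is a well-defined linear isometry on all of $L^2(G) \otimes V$. Surjectivity follows because an arbitrary $F \in L^2(G,V_\rho)$ has components $F_i(g) = \langle F(g), e_i\rangle_V$ lying in $L^2(G)$, and $A(\sum_i F_i \otimes e_i) = F$; hence $A$ is unitary.

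Next I would check the intertwining relation on simple tensors, where it reduces to a one-line computation: for $f \otimes v$ and $g, g' \in G$,
\begin{equation}
\bigl(A(\lambda(g) \otimes \Id_V)(f\otimes v)\bigr)(g') = \bigl(\lambda(g)f\bigr)(g')\, v = f(g^{-1}g')\, v = (fv)(g^{-1}g') = \bigl(\Lambda(g)(A(f\otimes v))\bigr)(g').
\end{equation}
So $A \circ (\lambda(g)\otimes \Id_V) = \Lambda(g) \circ A$ holds on the dense subspace of finite sums of simple tensors, and since all three operators $A$, $\lambda(g)\otimes\Id_V$, $\Lambda(g)$ are bounded, the identity extends to all of $L^2(G)\otimes V$ by continuity.

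There is no genuine obstacle here; the only points requiring a modicum of care are that $A$ is defined consistently with the tensor-product structure (handled by expanding in a fixed orthonormal basis of $V$, which is legitimate since $V$ is finite-dimensional so $L^2(G)\otimes V$ needs no completion beyond the algebraic tensor product) and that the Haar measure used is the same on both sides, which is built into the ambient setup. I would therefore present the argument compactly, noting the basis expansion for well-definedness and isometry, surjectivity via components, and the displayed computation for the intertwining property, each followed by a density-and-continuity remark.
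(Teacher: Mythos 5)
Your proof is correct and is precisely the ``short calculation'' the paper alludes to without writing out: the paper gives no proof of this lemma beyond that remark, and your basis expansion for unitarity, the one-line check of $A\circ(\lambda(g)\otimes\Id_V)=\Lambda(g)\circ A$ on simple tensors, and the density-and-continuity extension are exactly the intended argument. The only blemish is the stray $V_\rho$ where you mean $V$ in the surjectivity step.
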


This lemma also shows that, if we choose an orthonormal basis $e_1,\ldots,e_{\dim V} \in V$, elements of $L^2(G,V)$ are simply linear combinations $f = \sum_i f^i e_i$ with component functions $f^i \in L^2(G)$. We use this fact in some calculations of vector-valued integrals, and the component functions will also be important in Section \ref{subsec:RKHS}.

\subsection{\texorpdfstring{$G$}{G}-equivariant and convolutional layers}\label{subsec:group} 

Given a homogeneous $G$-space $\mathcal{M}$, we observed that vector bundles $\pi : E \to \mathcal{M}$ may inherit the  global symmetry of $\mathcal{M}$. We took a closer look at such homogeneous vector bundles and found that they are isomorphic to associated bundles $G \times_\rho V_\rho$, and therefore fit within the equivariance framework of Section \ref{subsec:ENN} . We also saw how the global symmetry of $\mathcal{M}$ is encoded in data points and feature maps via induced representations, and we want $G$-equivariant layers to preserve this global symmetry.

Consider homogeneous vector bundles $E_\rho = G \times_\rho V_\rho$ and $E_\sigma = G \times_\sigma V_\sigma$, and recall Definition \ref{def:layer} of layers as general linear maps $\Phi : \Gamma_c(E_\rho) \to \Gamma_c(E_\sigma)$. We are mainly interested in \emph{bounded} layers from an application point of view, and we can make this restriction now that the domain and codomain are normed spaces. Furthermore, any bounded layer can be uniquely extended to a bounded linear map
\begin{equation}\label{eq:extension}
\Phi : L^2(E_\rho) \to L^2(E_\sigma),
\end{equation}
and we assume this extension has already been made.

\begin{definition}\label{def:Glayer}
A bounded linear map $\Phi : L^2(E_\rho) \to L^2(E_\sigma)$ is called a \emph{$G$-equivariant layer} if it intertwines the induced representations:
\begin{equation}\label{eq:Glayer}
\Phi \circ \mathrm{ind}_K^G \rho = \mathrm{ind}_K^G \sigma \circ \Phi.
\end{equation}
That is, $G$-equivariant layers are elements $\Phi \in \mathrm{Hom}_G(L^2(E_\rho),L^2(E_\sigma))$.
\end{definition}

\begin{remark}
We could also have defined $G$-equivariant layers as bounded linear maps $\phi : L^2(G;\rho) \to L^2(G;\sigma)$ that intertwine the induced representations:
\begin{equation}
\phi \circ \mathrm{Ind}_K^G \rho = \mathrm{Ind}_K^G \sigma \circ \Phi,
\end{equation}
i.e., elements $\phi \in \mathrm{Hom}_G(L^2(G;\rho),L^2(G,\sigma))$. These definitions are clearly equivalent.
\end{remark}

Apart from minor technical differences, Definition \ref{def:Glayer} coincides with the definition of equivariant maps in \cite{cohen2018general}. We have thus obtained GCNNs almost directly from the definition of homogeneous vector bundles and a desire for layers to respect the global symmetry. This shows that homogeneous vector bundles is the natural setting for GCNNs.

Let us now define convolutional layers.

\begin{definition}\label{def:convlayer}
A \emph{convolutional layer} $L^2(G;\rho) \to L^2(G;\sigma)$ is a bounded operator
\begin{equation}\label{eq:convlayer}
[\kappa \star f](g) = \int_G \kappa(g^{-1}g') f(g') \ \dd g', \qquad f \in L^2(G;\rho),
\end{equation}
with an operator-valued kernel $\kappa : G \to \mathrm{Hom}(V_\rho,V_\sigma)$.
\end{definition}

Of course, not any function $\kappa : G \to \mathrm{Hom}(V_\rho,V_\sigma)$ can be chosen as the kernel of a convolutional layer. The kernel must ensure both that \eqref{eq:convlayer} is bounded and that $\phi f \in L^2(G;\sigma)$ for each $f \in L^2(G;\rho)$. We give a sufficient condition for boundedness in Lemma \ref{lemma:conv_bounded} and the other requirement has been studied in detail in  \cite{cohen2018general,lang2020wigner}.

The next result is an almost immediate consequence of the Fubini-Tonelli theorem.

\begin{proposition}
The adjoint of \eqref{eq:convlayer} is the integral operator
\begin{equation}\label{eq:convadjoint}
[f * \kappa^*](g) = \int_G \kappa^*({g'}^{-1}g) f(g') \ \dd g', \qquad f \in L^2(G;\sigma),
\end{equation}
where $\kappa^*$ is the pointwise adjoint of $\kappa$. That is, $(\kappa \star \cdot)^* = \cdot * \kappa^*$.
\end{proposition}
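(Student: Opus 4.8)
The plan is to compute $\langle \kappa \star f, h \rangle_{L^2(G;\sigma)}$ for arbitrary $f \in L^2(G;\rho)$ and $h \in L^2(G;\sigma)$, and to manipulate this expression into the form $\langle f, T h \rangle_{L^2(G;\rho)}$ for some explicit integral operator $T$; by uniqueness of adjoints it will then follow that $(\kappa \star \cdot)^* = T$, and we will identify $T$ with the operator in \eqref{eq:convadjoint}. First I would reduce to the dense subspaces $C_c(G;\rho)$ and $C_c(G;\sigma)$, where all integrals in sight are absolutely convergent and Fubini--Tonelli applies without subtlety; since $\kappa \star \cdot$ is assumed bounded and its purported adjoint will also be bounded (this is part of what must be checked, or deduced from the fact that the adjoint of a bounded operator is automatically bounded), the identity extends to all of $L^2$ by continuity.

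The core calculation runs as follows. Writing out the definitions and using the pointwise identity $\langle \kappa(g^{-1}g') f(g'), h(g)\rangle_\sigma = \langle f(g'), \kappa(g^{-1}g')^* h(g)\rangle_\rho$ (this is exactly what ``pointwise adjoint'' $\kappa^*$ means, with $\kappa^*(y) = \kappa(y)^*$),
\begin{equation}
\langle \kappa \star f, h\rangle_{L^2(G;\sigma)} = \int_G \int_G \langle f(g'), \kappa^*(g^{-1}g') h(g)\rangle_\rho \ \dd g' \ \dd g.
\end{equation}
Fubini--Tonelli swaps the order of integration (justified on $C_c$ by compact support and continuity, giving a compactly supported continuous integrand on $G \times G$), and then I substitute $y = g^{-1}g'$, i.e. view the inner integral over $g$ with $g'$ fixed; unimodularity of $G$ is what guarantees that the relevant changes of variable $g \mapsto g^{-1}$ and $g \mapsto g' g^{-1}$ preserve Haar measure, so that
\begin{equation}
\int_G \kappa^*(g^{-1}g') h(g) \ \dd g = \int_G \kappa^*({g''}^{-1}g') h(g'') \ \dd g'' = [h * \kappa^*](g').
\end{equation}
Here I am renaming the dummy variable and matching it against \eqref{eq:convadjoint}; one should double-check the exact placement of inverses so that the substitution indeed produces ${g'}^{-1}g$ inside $\kappa^*$ rather than its inverse, adjusting the variable change accordingly. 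Substituting back gives $\langle \kappa \star f, h\rangle_{L^2(G;\sigma)} = \int_G \langle f(g'), [h*\kappa^*](g')\rangle_\rho \ \dd g' = \langle f, h * \kappa^*\rangle_{L^2(G;\rho)}$, which is the claim on the dense subspace.

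The main obstacle is not conceptual but bookkeeping: keeping the inverses and the order of the two group variables straight through the change of variables, and making sure unimodularity is invoked at precisely the step where it is needed (the substitution in the $g$-integral). A secondary point to address is well-definedness of the right-hand side as an operator $L^2(G;\sigma) \to L^2(G;\rho)$: a priori we only know $\kappa \star \cdot$ is bounded, but since the adjoint of any bounded operator between Hilbert spaces exists and is bounded, once we verify that \eqref{eq:convadjoint} agrees with that adjoint on a dense subspace the boundedness of \eqref{eq:convadjoint} comes for free, and the density of $C_c(G;\rho)$ in $L^2(G;\rho)$ (and similarly for $\sigma$) closes the argument. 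I would state this density and the continuity-extension step explicitly rather than leaving it implicit.
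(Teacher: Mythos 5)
Your proposal is correct and is exactly the argument the paper has in mind: the paper gives no written proof, merely noting that the proposition is ``an almost immediate consequence of the Fubini--Tonelli theorem,'' and your computation (pass to the pointwise adjoint inside the inner product, apply Fubini on the dense subspaces $C_c$, extend by boundedness) is that argument spelled out. One small simplification: after swapping the order of integration, the inner integral $\int_G \kappa^*(g^{-1}g')h(g)\,\dd g$ already coincides with $[h*\kappa^*](g')$ upon renaming the dummy variable, so no change of variables --- and hence no appeal to unimodularity --- is actually needed at that step.
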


One way to ensure that the operators \eqref{eq:convlayer}-\eqref{eq:convadjoint} are bounded, is to put a bound on the kernel matrix elements $\kappa_{ij} : G \to \mathbb{C}$ for any given choice of bases in $V_\rho$, $V_\sigma$.

\begin{lemma}\label{lemma:conv_bounded}
The operators \eqref{eq:convlayer} and \eqref{eq:convadjoint} are bounded if $\kappa_{ij} \in L^1(G)$ for all $i,j$.
\end{lemma}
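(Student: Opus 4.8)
The plan is to estimate the $L^2$ operator norm of $\kappa \star \cdot$ (the bound for the adjoint $\cdot * \kappa^*$ then follows by the same argument, or simply because an operator and its adjoint have equal norm). First I would fix orthonormal bases $e_1,\dots,e_{\dim V_\rho}$ of $V_\rho$ and $e_1',\dots,e_{\dim V_\sigma}'$ of $V_\sigma$, so that a feature map $f \in C_c(G;\rho)$ decomposes as $f = \sum_j f^j e_j$ with scalar component functions $f^j$, and the kernel has scalar matrix elements $\kappa_{ij} : G \to \mathbb{C}$. Then
\[
[\kappa \star f](g) = \sum_i \Big( \sum_j \int_G \kappa_{ij}(g^{-1}g') f^j(g') \ \dd g' \Big) e_i' = \sum_{i,j} (\kappa_{ij} * f^j)(g)\, e_i',
\]
so in components the layer is just a finite matrix of scalar convolution operators $f^j \mapsto \kappa_{ij} * f^j$ on $L^2(G)$.

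The key ingredient is the classical fact that, on a locally compact group with Haar measure, convolution with an $L^1$ function is bounded on $L^2$: if $k \in L^1(G)$ then $\|k * h\|_{L^2(G)} \le \|k\|_{L^1(G)} \|h\|_{L^2(G)}$ for all $h \in L^2(G)$. This is Young's inequality, and it is proved by writing $(k*h)(g) = \int_G k(g') h({g'}^{-1}g)\ \dd g'$, applying Minkowski's integral inequality to pull the $L^2(\dd g)$ norm inside the $\dd g'$ integral, and using left-invariance of Haar measure to get $\|L_{g'}h\|_{L^2} = \|h\|_{L^2}$. (One must be a little careful that $G$ is unimodular here, which we have assumed, so that the relevant change of variables introduces no modular-function factor; alternatively cite \cite[Appendix 4]{folland2016course} or a standard harmonic analysis reference.) Applying this with $k = \kappa_{ij}$ gives $\|\kappa_{ij} * f^j\|_{L^2(G)} \le \|\kappa_{ij}\|_{L^1(G)} \|f^j\|_{L^2(G)}$.

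Finally I would assemble the component estimates. Using $\|f\|_{L^2(G;\rho)}^2 = \sum_j \|f^j\|_{L^2(G)}^2$ (Lemma \ref{lemma:L2equiv}) together with the triangle inequality and Cauchy--Schwarz over the finite index set, one obtains a bound of the form
\[
\|\kappa \star f\|_{L^2(G;\sigma)} \le \Big( \sum_{i,j} \|\kappa_{ij}\|_{L^1(G)} \Big) \|f\|_{L^2(G;\rho)},
\]
or any similar finite combination of the $\|\kappa_{ij}\|_{L^1(G)}$; since $C_c(G;\rho)$ is dense in $L^2(G;\rho)$, this extends by continuity to all of $L^2(G;\rho)$, so the operator is bounded. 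The same chain of inequalities applied to \eqref{eq:convadjoint}, whose matrix elements are $g \mapsto \kappa_{ij}^*({g'}^{-1}g) = \overline{\kappa_{ji}(g{g'}^{-1})}$ — an $L^1$ function because $\kappa_{ji} \in L^1(G)$ and $G$ is unimodular — gives boundedness of the adjoint. The main (and only) real obstacle is the Young's-inequality step: making sure the convolution integrals converge absolutely for a.e.\ $g$ and that the unimodularity hypothesis is used correctly in the change of variables; everything else is bookkeeping over the finite bases.
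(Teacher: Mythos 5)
Your proposal is correct and follows essentially the same route as the paper: decompose $f$ and $\kappa$ into scalar components via orthonormal bases, apply Young's inequality $\|k*h\|_2 \le \|k\|_1\|h\|_2$ componentwise, and sum over the finite index set. The only (immaterial) difference is that the paper applies Young's inequality directly to the adjoint \eqref{eq:convadjoint}, which is literally of the form $\int_G f(g')\kappa^*({g'}^{-1}g)\,\dd g'$, and then deduces boundedness of \eqref{eq:convlayer} by duality, whereas you estimate \eqref{eq:convlayer} first --- which requires the small extra observation (which unimodularity supplies) that the reflected kernel $g \mapsto \kappa_{ij}(g^{-1})$ has the same $L^1$ norm as $\kappa_{ij}$.
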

\begin{proof}
We need only prove that \eqref{eq:convadjoint} is bounded, its adjoint \eqref{eq:convlayer} will be bounded as well. Choose bases $e_1,\ldots,e_{\dim V_\rho} \in V_\rho$ and $\tilde{e}_1,\ldots,\tilde{e}_{\dim V_\sigma} \in V_\sigma$ and observe that, because $L^2(G;\sigma) \subset L^2(G,V_\sigma)$, Lemma \ref{lemma:L2equiv} enables the decomposition of $f \in L^2(G;\sigma)$ into component functions $f^i \in L^2(G)$:
\begin{equation}
f = \sum_{i=1}^{\dim V_\sigma} f^i \tilde{e}_i.
\end{equation}
To be clear, the kernel $\kappa$ is similarly decomposed into matrix elements $\kappa_{ij} = \langle \tilde{e}_j , \kappa e_i\rangle_{\sigma}$ and we have $\kappa_{ji}^* = \overline{\kappa_{ij}}$. The integral \eqref{eq:convadjoint} now takes the form
\begin{equation}
[f * \kappa^*](g) = \sum_{j=1}^{\dim V_\rho} \left(\int_G  \sum_{i=1}^{\dim V_\sigma} \kappa_{ji}^*(g'^{-1}g)f^i(g) \ \dd g\right) e_j,
\end{equation}
so by Young's convolution inequality,
\begin{equation}
\begin{aligned}
\| f * \kappa^*\|_{L^2(G;\rho)}^2 &\leq \sum_{i,j} \int_G \left| \int_G \kappa_{ji}^*({g'}^{-1} g) f^i(g') \ \dd g'\right|^2 \dd g = \sum_{i,j} \| f^i * \kappa_{ji}^*\|_{2}^2\\
&\leq  \sum_{i,j} \| \kappa_{ij}\|_{1}^2 \|f^i \|_{2}^2 \leq M \sum_{i} \|f^j\|_{2}^2 = M \|f\|_{L^2(G;\sigma)}^2,
\end{aligned}
\end{equation}  
where $M = \sum_{i,j} \|\kappa_{ij}\|_{1}^2 < \infty$ if $\kappa_{ij} \in L^1(G)$ for all $i,j$.
\end{proof}

We are interested in convolutional layers partly because they are concrete examples of $G$-equivariant layers, which we show next.

\begin{proposition}\label{prop:convlayers}
Convolutional layers are $G$-equivariant layers.
\end{proposition}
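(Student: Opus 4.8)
The plan is to verify the intertwining property \eqref{eq:Glayer} directly for a convolutional layer, working on the $C_c(G;\rho)$ side where $G$-equivariant layers are described by $\mathrm{Ind}_K^G\rho$ rather than on the associated-bundle side; the two descriptions are equivalent, so this suffices. Boundedness is part of the definition of a convolutional layer, so the only thing to check is that $\kappa \star \cdot$ intertwines the induced representations, i.e. that $(\kappa \star (\mathrm{Ind}_K^G\rho(g) f)) = \mathrm{Ind}_K^G\sigma(g)(\kappa \star f)$ for all $g \in G$ and $f \in L^2(G;\rho)$.

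First I would reduce to dense subspaces: since both sides are bounded operators, it is enough to verify the identity for $f \in C_c(G;\rho)$, where the integral in \eqref{eq:convlayer} converges absolutely and Fubini-type manipulations are unproblematic. Then I would simply expand both sides. On one side,
\begin{equation}
\bigl(\kappa \star (\mathrm{Ind}_K^G\rho(g) f)\bigr)(g') = \int_G \kappa({g'}^{-1} g'') \, f(g^{-1} g'') \ \dd g'',
\end{equation}
and the substitution $g'' = g h$ (legitimate because $G$ is unimodular, so left Haar measure is translation-invariant under $h \mapsto gh$) turns this into $\int_G \kappa({g'}^{-1} g h) f(h) \ \dd h = \int_G \kappa\bigl((g^{-1}g')^{-1} h\bigr) f(h)\ \dd h = (\kappa \star f)(g^{-1}g')$. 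On the other side, $\bigl(\mathrm{Ind}_K^G\sigma(g)(\kappa \star f)\bigr)(g') = (\kappa \star f)(g^{-1}g')$ by the definition of the induced representation. The two agree, which is exactly \eqref{eq:Glayer} transported to the $C_c(G;\rho)$ picture.

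There is essentially no hard step here — the proposition is a one-line change-of-variables computation — but I would be careful about two small points. First, one must confirm that $\kappa \star f$ actually lands in $L^2(G;\sigma)$ and satisfies the $\sigma$-equivariance transformation property \eqref{eq:feature_map} under the right $K$-action, so that it is a legitimate element of $L^2(G;\sigma)$; this is implicit in calling $\kappa \star \cdot$ a convolutional layer (Definition \ref{def:convlayer}) via the remark following that definition, but it is worth noting. Second, the appeal to unimodularity is the only place a hypothesis of the standing assumptions is actually used, and it should be made explicit: without left-invariance of the Haar measure under $h \mapsto gh$ the substitution above would introduce a modular factor and equivariance could fail. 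Finally I would remark that the same computation, run with the adjoint formula \eqref{eq:convadjoint}, shows the adjoint of a convolutional layer is again $G$-equivariant, consistent with $\mathrm{Hom}_G(L^2(E_\rho),L^2(E_\sigma))$ being closed under taking adjoints.
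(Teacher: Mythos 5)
Your proof is correct and follows essentially the same route as the paper's: boundedness is immediate from Definition \ref{def:convlayer}, and the intertwining property \eqref{eq:Glayer} is verified by the change of variables $g'' \mapsto g g''$ using left-invariance of the Haar measure. The additional remarks (density reduction, well-definedness of $\kappa \star f$ as an element of $L^2(G;\sigma)$, the adjoint) are sensible but not needed beyond what the paper already records.
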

\begin{proof}
Convolutional layers $\kappa\star \cdot : L^2(G;\rho) \to L^2(G;\sigma)$ are bounded linear operators by definition, so the only thing we need to prove is that $\kappa \star \cdot$ intertwines the induced representations. This follows immediately from left-invariance of the Haar measure: For each $f \in L^2(G;\rho)$ and all $g,h \in G$,
\begin{equation}
\begin{aligned}
\big[\kappa \star  \big(\mathrm{Ind}_K^G \rho(g)f\big)\big](h) &= \int_G \kappa(h^{-1}g') f(g^{-1}g') \ \dd g' \qquad (g' \mapsto gg')\\
&= \int_G \kappa\big((g^{-1}h)^{-1} g'\big) f(g') \ \dd g' = [\kappa \star f](g^{-1}h),
\end{aligned}
\end{equation}
hence $\big[\kappa \star  \mathrm{Ind}_K^G \rho(g)f\big] = \mathrm{Ind}_K^G \sigma(g) [\kappa \star f]$.
\end{proof}

\begin{example}
Let us describe where ordinary CNNs fit in the present context. CNNs represent the case $G = \mathbb{Z}^2$ when $K = \{0\}$ is the trivial subgroup. The corresponding homogeneous space is $G/K = \mathbb{Z}^2/\{0\} = \mathbb{Z}^2$ and the quotient map $q : G \to G/K$ is thus the identity map on $\mathbb{Z}^2$. Its inverse, the identity map $\omega : G/K \to G$, is a globally defined gauge that eliminates the need for gauge equivariance, as we may choose to work exclusively in this one gauge. This is just a reflection of the fact that
\begin{equation}
G = \mathbb{Z}^2 = \mathbb{Z}/\{0\} \times \{0\} = G/K \times K,
\end{equation}
is (obviously) trivial as a principal bundle. Its associated bundles $E_\rho = \mathbb{Z}^2 \times_\rho V_\rho$ are also trivial: partly because the finite-dimensional $K$-representation $\sigma$ must be trivial, and partly because each equivalence class $[g,v]$ only contains a single representative. These reasons are, of course, due to the triviality of $K$.

This is not to say that the equivariant framework of Section \ref{subsec:ENN} is uninteresting when dealing with CNNs, or with GCNNs for other homogeneous spaces $\mathcal{M} = G/K$ with $K$ trivial. We saw in Sections \ref{subsec:hombundle}-\ref{subsec:indrep} how the homogeneity give rise to induced representations, which encode the global symmetry in both data points and feature maps. This is a useful perspective to have, and $G$-equivariant layers are interesting even when the bundles are trivial.

Triviality of the associated bundles, $E_\rho \simeq \mathbb{Z}^2 \times \mathbb{C}^{m}$ where $m = \dim V_\sigma$,\footnote{Recall that we focus on complex vector bundles, hence the use of $\mathbb{C}^m$ instead of $\mathbb{R}^m$.} implies that data points and feature maps are general square-integrable functions,
\begin{equation}
L^2(E_\rho) \simeq L^2(\mathbb{Z}^2;\rho) \simeq L^2(\mathbb{Z}^2,  \mathbb{C}^m),
\end{equation}
and are thereby extensions of compactly supported functions $f : \mathbb{Z}^2 \to \mathbb{C}^m$. This ties well into the discussion in Section \ref{subsec:CNN}. Convolutional layers \eqref{eq:convlayer} reduce to bounded linear operators $L^2(\mathbb{Z}^2,  \mathbb{C}^m) \to L^2(\mathbb{Z}^2, \mathbb{C}^n)$ and take the form
\begin{equation}\label{eq:convlayer_CNN}
(\kappa \star f)(x) = \sum_{y \in \mathbb{Z}^2} \kappa(y-x)f(y),
\end{equation}
as the Haar measure on $\mathbb{Z}^2$ is the counting measure. The kernel $\kappa : \mathbb{Z}^2 \to \mathrm{Hom}(\mathbb{C}^m,\mathbb{C}^n)$ is finitely supported in practice, so boundeness of \eqref{eq:convlayer_CNN} is ensured by  Lemma \ref{lemma:conv_bounded}.

Interestingly, all $\mathbb{Z}^2$-equivariant layers are convolutional layers; there are no other types of $\mathbb{Z}^2$-equivariant layers than \eqref{eq:convlayer_CNN}. This is a consequence of Theorem \ref{thm:main} and is proven in Corollary \ref{corr:LCA_discrete} below.
\end{example}

For more general groups $G$, it is no longer true that all $G$-equivariant layers are convolutional layers; we give an example of this fact in \mbox{Example \ref{ex:identity}}. Implementations of GCNNs, however, are usually based on convolutional layers, or on analogous layers in the Fourier domain. What consequences does the restriction to convolutional layers have for the expressivity of GCNNs? Can we tell whether a given $G$-equivariant layer is expressible as a convolutional layer? The answer to this last question, it turns out, requires the following notion of reproducing kernel Hilbert spaces.

\begin{definition}\label{def:RKHS}
Let $G$ be a group, let $V$ be a finite-dimensional normed vector space, and let $\mathcal{H}$ be a Hilbert space of functions $G \to V$. Then $\mathcal{H}$ is a \emph{reproducing kernel Hilbert space (RKHS)} if the evaluation operator
\begin{equation}
\mathcal{E}_g : \mathcal{H} \to V, \qquad f \mapsto f(g),
\end{equation}
is bounded for all $g \in G$. Moreover, by \emph{left-invariant RKH subspace} $\mathcal{H} \subseteq L^2(G,V)$ we mean a closed subspace that is both a RKHS and an invariant subspace for the left regular representation $\Lambda$ on $L^2(G,V)$.
\end{definition}

\begin{remark}
The term RKHS is typically reserved for the scalar case $V = \mathbb{C}$, when the evaluation operator is a linear functional. Our version would instead be dubbed \emph{vector-valued} RKHS. We see little benefit from distinguishing between these cases, however, so we use the term RKHS all-encompassingly.
\end{remark}

The name RKHS is due to the existence of a kernel-type function that reproduces all elements of $\mathcal{H}$. To see how, choose an orthonormal basis $e_1,\ldots,e_{\dim V} \in V$ and write elements $v \in V$ as linear combinations $v = \sum_i v^i e_i $. The projection $P_i(v)  = v^i$ onto the $i$'th component is always continuous, so the composition $\mathcal{E}_{g,i} := P_i \circ \mathcal{E}_g$ is a continuous linear functional
\begin{equation}
\mathcal{E}_g^i  : \mathcal{H} \to \mathbb{C}, \qquad  f \mapsto f^i(g),
\end{equation}
for all $g \in G$ and $i = 1,\ldots,\dim V$. By the Riesz representation theorem, there are elements $\varphi_{g,i} \in \mathcal{H}$ such that $f^i(g) = \mathcal{E}_{g,i}(f) = \langle f, \varphi_{g,i}\rangle$, hence
\begin{equation}\label{eq:linearcomb}
f(g) = \sum_{i=1}^{\dim V}  \langle f, \varphi_{g,i}\rangle e_i.
\end{equation}
Now, if $\mathcal{H} \subseteq L^2(G,V)$ is a left-invariant RKH subspace, expanding the functions $\varphi_{g,i}$ in the orthonormal basis, $\varphi_{g,i} = \sum_j \varphi_{g,i}^j e_j $, yields the formula
\begin{equation}\label{eq:RKHSintegral1}\begin{aligned}
f(g) &= \sum_i \left( \int_G \sum_j  \overline{\varphi_{g,i}^j(g')}f^j(g') \ \dd g'\right) e_i = \int_G \varphi_g^*(g') f(g') \ \dd g',
\end{aligned}\end{equation}
where $\varphi_g^*$ is the conjugate transpose of the matrix $(\varphi_g)_i^j = \varphi_{g,i}^j$. By left-invariance,
\begin{equation}\label{eq:RKHSintegral2}\begin{aligned}
f(g) &= \big( \Lambda(g^{-1}) f\big)(e) = \int_G \varphi_e^*(g^{-1}g') f(g') \ \dd g',
 \end{aligned}\end{equation}
hence $f \in \mathcal{H}$ is \emph{reproduced} by the operator-valued \emph{kernel} $\varphi_e : G \to \mathrm{Hom}(V)$.

\begin{remark}
The reproducing kernel $\varphi_e$ is unique and thus independent of the choice of basis in $V$. This follows from uniqueness in the Riesz representation theorem.
\end{remark}

It is now clear why left-invariant RKH subspaces of $L^2(G,V)$ are relevant when discussing convolutional layers, as the latter are given by integral operators similar to \eqref{eq:RKHSintegral2}. In order to show that an abstract $G$-equivariant layer $\phi : L^2(G;\rho) \to L^2(G;\sigma)$ can be written as a convolutional layer, it is almost necessary for it to act in a RKHS:

\begin{example}\label{ex:identity}
The identity operator $\phi : L^2(G;\sigma) \to L^2(G;\sigma)$ is clearly a $G$-equivariant layer regardless of $G$, $K$, $\sigma$, but it is only a convolutional layer if $L^2(G;\sigma)$ is a RKHS. This is because when $\phi$ is the identity, \eqref{eq:convlayer} becomes the reproducing property
\begin{equation}
f(g) = \int_G \kappa(g^{-1}g') f(g') \ \dd g', \qquad f \in L^2(G;\sigma).
\end{equation}
It follows that not every $G$-equivariant layer is a convolutional layer, because $L^2(G;\sigma)$ is not always a RKHS. When $\sigma$ is the trivial representation, for instance, $L^2(G;\sigma)$ reduces to $L^2(G)$ which is not a RKHS when $G$ is nondiscrete \cite[Theorem 2.42]{fuhr2005abstract}.
\end{example}

At this point, we know that global symmetry manifests itself in feature maps and data points through the induced representation, and we used this knowledge to define $G$-equivariant layers. We also defined convolutional layers and showed that these are special cases of $G$-equivariant layers, but the converse problem is much more subtle: When can a $G$-equivariant layer be expressed as a convolutional layer? The answer, as we have just seen, is directly related to the concept of RKHS and our next result makes this relation precise. It can be considered our main theorem.

\begin{theorem}\label{thm:main}
Let $G$ be a unimodular Lie group, let $K \leq G$ be a compact subgroup, and consider homogeneous vector bundles $E_\rho,E_\sigma$ over $\mathcal{M} = G/K$. Suppose that
\begin{equation}
\phi : L^2(G;\rho) \to L^2(G;\sigma),
\end{equation}
is a $G$-equivariant layer. If $\phi$ maps into a left-invariant RKH subspace $\mathcal{H} \subseteq L^2(G;\sigma)$, then $\phi$ is a convolutional layer.
\end{theorem}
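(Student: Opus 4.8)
The plan is to show that $\phi$ coincides \emph{exactly} with a convolution operator $\kappa\star\cdot$, where the kernel $\kappa$ is manufactured from Riesz representatives. The reproducing-kernel hypothesis enters in precisely two places: it is what lets us speak of the pointwise value $(\phi f)(g)$ at all (cf.\ Example \ref{ex:identity}), and it is what makes the evaluation-at-identity functional bounded, so that the Riesz representation theorem applies.

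First I would introduce the bounded operator $T := \mathcal{E}_e\circ\phi : L^2(G;\rho)\to V_\sigma$. Since $\mathcal{H}$ is a RKHS, $\mathcal{E}_e$ is bounded on $\mathcal{H}$, and since $\phi$ is bounded with image contained in the closed subspace $\mathcal{H}$ (carrying the restricted norm), $T$ is a bounded linear map into the finite-dimensional space $V_\sigma$. Next I would use $G$-equivariance to express $(\phi f)(g)$ through $T$: for any $h\in\mathcal{H}$ one has $\mathcal{E}_g(h)=h(g)=(\Lambda(g^{-1})h)(e)=\mathcal{E}_e(\Lambda(g^{-1})h)$, and since $\mathrm{Ind}_K^G\sigma$ is the restriction of $\Lambda$ to the invariant subspace $\mathcal{H}$, the intertwining property of $\phi$ gives
\[
(\phi f)(g)=\mathcal{E}_e\big(\mathrm{Ind}_K^G\sigma(g^{-1})(\phi f)\big)=\mathcal{E}_e\big(\phi\,\mathrm{Ind}_K^G\rho(g^{-1})f\big)=T\big(\mathrm{Ind}_K^G\rho(g^{-1})f\big).
\]

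Then I would represent $T$ via Riesz. Fixing orthonormal bases $\{e_j\}\subset V_\rho$ and $\{\tilde e_i\}\subset V_\sigma$, each component $f\mapsto\langle\tilde e_i,Tf\rangle_\sigma$ is a bounded linear functional on $L^2(G;\rho)$, hence equals $\langle f,\psi_i\rangle_{L^2(G;\rho)}$ for a unique $\psi_i\in L^2(G;\rho)$. Substituting this into the displayed formula, unfolding $(\mathrm{Ind}_K^G\rho(g^{-1})f)(g')=f(gg')$, and changing variables $g'\mapsto g^{-1}g'$ (left-invariance of the Haar measure) yields
\[
(\phi f)(g)=\sum_i\Big(\int_G\big\langle f(g'),\psi_i(g^{-1}g')\big\rangle_\rho\,dg'\Big)\tilde e_i=\int_G\kappa(g^{-1}g')f(g')\,dg',
\]
where $\kappa(h)\in\mathrm{Hom}(V_\rho,V_\sigma)$ is defined through $\langle\tilde e_i,\kappa(h)e_j\rangle_\sigma=\overline{\langle e_j,\psi_i(h)\rangle_\rho}$, i.e.\ $\kappa$ is the pointwise adjoint of the matrix-valued function assembled from the $\psi_i$.

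Finally I would verify that $\kappa$ is an admissible kernel in the sense of Definition \ref{def:convlayer}. The pointwise estimate $\|\kappa(h)\|_{\mathrm{op}}\le\big(\sum_i\|\psi_i(h)\|_\rho^2\big)^{1/2}=:\Psi(h)$ with $\Psi\in L^2(G)$, combined with Cauchy--Schwarz and left-invariance, shows that the convolution integral converges absolutely for every $g$; and the operator it defines is $\phi$ itself, which by hypothesis is bounded and maps $L^2(G;\rho)$ into $L^2(G;\sigma)$. Hence $\phi=\kappa\star\cdot$ is a convolutional layer. I do not anticipate a serious obstacle: the only care needed is in checking that the manufactured $\kappa$ reproduces $\phi f$ pointwise rather than merely almost everywhere, which is automatic because the formula was derived directly from $\mathcal{E}_g(\phi f)$, and in noting (as Example \ref{ex:identity} already signals) that dropping the RKHS assumption would invalidate both the pointwise evaluation and the Riesz step.
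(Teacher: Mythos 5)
Your proposal is correct and follows essentially the same route as the paper: bound the evaluation-at-identity functional $f \mapsto (\phi f)^i(e)$ using the RKHS hypothesis, apply the Riesz representation theorem to obtain the kernel components, and use equivariance plus left-invariance of the Haar measure to propagate the formula from $e$ to general $g$. Your write-up is in fact slightly more explicit than the paper's at the step $(\phi f)(g) = T(\mathrm{Ind}_K^G\rho(g^{-1})f)$, where the intertwining property of $\phi$ is genuinely needed (the paper subsumes this into "proceeding as in \eqref{eq:RKHSintegral1}--\eqref{eq:RKHSintegral2}"), and your closing estimate on $\|\kappa(h)\|_{\mathrm{op}}$ is a harmless addition.
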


\begin{proof}
Fix orthonormal bases in $V_\rho$, $V_\sigma$. For $i =1,\ldots,\dim\sigma$, consider the functionals
\begin{equation}\label{eq:eval_functional}
\mathcal{E}^i : L^2(G;\rho) \to \mathbb{C}, \qquad \mathcal{E}^i(f) = (\phi f)^i(e),
\end{equation}
composing $\phi$ with evaluation at the identity element $e \in G$ and projection onto the $i$'th component. As $\phi$ maps into a left-invariant RKH subspace $\mathcal{H} \subseteq L^2(G;\sigma)$, \eqref{eq:eval_functional} is a bounded linear functional: $|\mathcal{E}^i(f)| \leq \|(\phi f)(e)\|_\sigma \leq \| \phi f\|_{L^2(G;\sigma)} \leq \|\phi\| \|f\|_{L^2(G;\rho)}$. By the Riesz representation theorem, there is a unique $\varphi_i \in L^2(G;\rho)$ such that
\begin{equation}\label{eq:mainthm1}
\mathcal{E}^i(f) = \int_G \langle f(g), \varphi_i(g) \rangle_\rho \ \dd g  = \int_G \sum_{j=1}^{\dim\rho} f^i(g) \overline{ \varphi_i^j (g)} \ \dd g,
\end{equation}
and proceeding as in \eqref{eq:RKHSintegral1}-\eqref{eq:RKHSintegral2} with $\kappa := \varphi_e^*$ yields the desired relation
\begin{equation}\label{eq:thm_main_conv}
(\phi f)(g) = \int_G \kappa(g^{-1}g') f(g') \ \dd g'.
\end{equation}
\end{proof}

\begin{remark}
Theorem \ref{thm:main} is a generalization of \cite[Theorem 6.1]{cohen2018general}, which was proven under the assumption that $\phi$ is an integral operator $(\phi f)(g) = \int_G \kappa(g,g') f(g') \ \dd g'$.
\end{remark}

\begin{remark}
While Theorem \ref{thm:main} is similar in spirit to  \cite[Theorem 1]{kondor2018generalization}, there are also some clear differences. For example, we work with unimodular Lie groups whereas \cite{kondor2018generalization} use compact groups, but \cite[Theorem 1]{kondor2018generalization} is also stronger in this case as there is no criterion on the layer. Another difference is that \cite{kondor2018generalization} analyzes the whole network structure while we focus on individual layers. We also assume that the homogeneous space $G/K$ is the same before and after each layer, in constrast to \cite{kondor2018generalization}.

In the special case of single-layer networks with compact $G$, \cite[Theorem 1]{kondor2018generalization} states that any $G$-equivariant layer is a convolutional layer. Example \ref{ex:identity} seems to contradict this statement when $G$ is non-discrete compact. This conflict is possibly due to minor technical differences in the assumptions on layers and data points, but we have not identified the precise cause.
\end{remark}

We end this section with a result that could simplify the numerical computations of convolutional layers, as integrals over $G/K$ are sometimes easier to compute than integrals over $G$. For example when $G = SO(3)$, $K = SO(2)$, and $G/K \simeq S^2$. This result is similar to the generalized convolutions described in \cite[Section 4.1]{kondor2018generalization}

\begin{corollary}
Let $\phi : L^2(G;\rho) \to L^2(G;\sigma)$ be as in Theorem \ref{thm:main} and let $\kappa$ be the kernel of the resulting convolutional layer \eqref{eq:thm_main_conv}. Then
\begin{equation}\label{eq:quotient_int}
(\phi f)(g) = \int_{G/K} \kappa(g^{-1}x)f(x) \ \dd x.
\end{equation}
\end{corollary}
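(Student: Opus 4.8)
The plan is to start from the convolutional formula \eqref{eq:thm_main_conv} furnished by Theorem \ref{thm:main} and simply apply the quotient integral formula \eqref{eq:qif} to the $V_\sigma$-valued function $g' \mapsto \kappa(g^{-1}g')f(g')$. The key observation is that this integrand is constant on right $K$-cosets, so it descends to a well-defined function on $G/K$; consequently the inner $K$-integral in \eqref{eq:qif} contributes only the total Haar mass of $K$, which is $1$ once $K$ is given its normalized Haar measure (the normalization already used implicitly in this paper, e.g. in the proof of unitary equivalence of the induced representations). What survives is precisely the integral over $G/K$ claimed in \eqref{eq:quotient_int}.

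To see the right-$K$-invariance I would use two transformation laws. First, since $f \in L^2(G;\rho)$ is a feature map, $f(g'k) = \rho(k)^{-1}f(g')$ for all $k \in K$. Second, recall from the proof of Theorem \ref{thm:main} that $\kappa = \varphi_e^*$ is built from the Riesz representatives $\varphi_i \in L^2(G;\rho)$; expanding $\varphi_i(g'k) = \rho(k)^{-1}\varphi_i(g')$ and invoking unitarity of $\rho$ gives $\kappa(g'k) = \kappa(g')\rho(k)$ as maps $V_\rho \to V_\sigma$. Combining the two, $\kappa(g^{-1}g'k)f(g'k) = \kappa(g^{-1}g')\rho(k)\rho(k)^{-1}f(g') = \kappa(g^{-1}g')f(g')$, so the integrand depends only on $g'K$; the same identity shows that the expression $\kappa(g^{-1}x)f(x)$ in \eqref{eq:quotient_int} is unambiguous, i.e. independent of the chosen representative of $x \in G/K$.

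With these facts the computation is short:
\begin{equation}
\begin{aligned}
(\phi f)(g) &= \int_G \kappa(g^{-1}g') f(g') \ \dd g'\\
&= \int_{G/K} \int_K \kappa(g^{-1}xk) f(xk) \ \dd k \ \dd x\\
&= \int_{G/K} \kappa(g^{-1}x) f(x) \ \dd x,
\end{aligned}
\end{equation}
where the middle equality is \eqref{eq:qif} and the last uses the $K$-invariance together with $\int_K \dd k = 1$. One technical point worth spelling out is that \eqref{eq:qif} is stated for scalar functions in $C_c(G)$, whereas here the integrand is a merely $L^1$, $V_\sigma$-valued function: but $g' \mapsto \kappa(g^{-1}g')f(g')$ is a pointwise product of $L^2$ functions, hence lies in $L^1(G,V_\sigma)$ by Cauchy--Schwarz, and the quotient integral formula extends to this case by density of $C_c(G)$ in $L^1(G)$, applied componentwise in an orthonormal basis of $V_\sigma$.

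The only genuine obstacle, modest as it is, is the bookkeeping behind the right-$K$-transformation law $\kappa(g'k) = \kappa(g')\rho(k)$: one must either unwind the explicit construction of $\kappa$ from the proof of Theorem \ref{thm:main} as above, or instead derive this identity from the requirement that the convolution map into $L^2(G;\sigma)$. Everything else is a routine invocation of the quotient integral formula and the compactness (hence finite, normalizable Haar measure) of $K$.
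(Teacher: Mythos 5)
Your argument is correct and rests on the same two ingredients as the paper's own proof---the quotient integral formula \eqref{eq:qif} and the right-$K$-equivariance of feature maps combined with unitarity of $\rho$---so it is essentially the same computation: the paper packages the $K$-invariance step by writing $(\phi f)^i(g)=\langle f,\mathrm{Ind}_K^G\rho(g)\varphi_i\rangle_{L^2(G;\rho)}$ and passing through the unitary isomorphism $L^2(G;\rho)\cong L^2(E_\rho)$, whereas you verify $\kappa(g'k)=\kappa(g')\rho(k)$ explicitly and apply \eqref{eq:qif} directly to the convolution integral. Both your derivation of that transformation law from the feature-map property of the Riesz representatives $\varphi_i$ and your density argument for extending \eqref{eq:qif} beyond $C_c(G)$ are sound.
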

\begin{proof}
In the proof of Theorem \ref{thm:main}, we constructed the kernel $\kappa$ from the components of $\varphi_i \in L^2(G;\rho)$, and unitarity of $\rho$ clearly implies that the expression $\langle f(x) , \varphi_i(x)\rangle_\rho$ is well-defined. We may therefore use the unitary structure \eqref{eq:unitarystructure} to get the following relation for all component functions $(\phi f)^i$ and all $g \in G$:
\begin{alignat}{1}
(\phi f)^i(g) &=  \langle f, \mathrm{Ind}_K^G\rho(g) \varphi_i\rangle_{L^2(G;\rho)} = \langle s_f , s_{\mathrm{Ind}_K^G\rho(g) \varphi_i} \rangle_{L^2(E_\rho)}\\
&= \int_{G/K} \langle s_f(x) , s_{\mathrm{Ind}_K^G\rho(g) \varphi_i}(x) \rangle_x \ \dd x\\
&= \int_{G/K} \langle f(x) , \mathrm{Ind}_K^G\rho(g) \varphi_i(x)\rangle_\rho \ \dd x\\
&= \int_{G/K} \sum_{j=1}^{\dim \rho} \overline{\varphi_i^j(g^{-1}x)} f^j(x) \ \dd x.
\end{alignat}
We now obtain \eqref{eq:quotient_int} by reconstructing $\kappa$ from its components $\kappa_{ij} = \overline{\varphi_i^j}$.
\end{proof}

\subsection{RKHS and bandlimited functions}\label{subsec:RKHS}

The strength of Theorem \ref{thm:main} naturally depends on how common left-invariant RKH subspaces of $L^2(G;\sigma)$ are. Our analysis of $G$-equivariant layers would not be complete without a discussion on this topic.

Let us proceed by investigating when the component functions $f^i$ of $f \in L^2(G;\sigma)$ are contained in a left-invariant RKH subspace $\mathcal{H} \subset L^2(G)$; these subspaces have been fully character\-ized when the unimodular Lie group $G$ is of \emph{type I} \cite{carey1978group,fuhr2005abstract}.  The unitary equivalence \eqref{eq:unitary_equiv} then ensures that $A(\mathcal{H} \otimes V_\sigma) \subset L^2(G,V_\sigma)$ is a left-invariant RKH subspace, and so is the closed subspace
\begin{equation}\label{eq:RKHS_inclusion}
A(\mathcal{H} \otimes V_\sigma) \cap L^2(G;\sigma) \subset L^2(G;\sigma).
\end{equation}

\begin{remark}
Groups of type I are, in a sense, groups with manageable representation theory. They include the most common groups, such as all finite, discrete, compact, or abelian groups, the Euclidean groups and many other groups. In particular, there is a considerable overlap between type I groups and the unimodular Lie groups that we already consider. See \cite{folland2016course,fuhr2005abstract} for more details.
\end{remark}

\begin{remark}
While $\rho,\sigma$ still denote finite-dimensional unitary representations of $K$, we reserve the letter $\gamma$ for elements of the unitary dual $\widehat {G}$, i.e., the space of equivalence classes of unitary representations. Specific representatives of $\gamma$ are written as $(\pi_\gamma, V_\gamma)$, and note that $V_\gamma$ need not be finite-dimensional unless $G$ is compact. The unimodular Lie group $G$ is assumed to be of type I throughout this section.
\end{remark}

\begin{proposition}[{\cite[Proposition 2.40]{fuhr2005abstract}}]\label{thm:RKHS_and_conv}
Let $\mathcal{H} \subseteq L^2(G)$ be a left-invariant RKH subspace. The kernel $\varphi \in \mathcal{H}$ is then a self-adjoint convolution idempotent,\footnote{That is, $\varphi = \varphi * \varphi^* = \varphi^*$ where $\varphi^*(g) := \overline{\varphi(g^{-1})}$ denotes involution.} and
\begin{equation}
\mathcal{H} = L^2(G) * \varphi =  \left\{ f * \varphi \ \middle| \ f \in L^2(G)\right\} \subset C(G).
\end{equation}
Conversely, if $\varphi \in L^2(G)$ is a self-adjoint convolution idempotent, then $\mathcal{H} = L^2(G) * \varphi$ is a left-invariant RKH subspace of $L^2(G)$.
\end{proposition}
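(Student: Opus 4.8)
The plan is to prove the two implications separately, in each case translating between the abstract reproducing property and explicit convolution. \emph{For the forward direction}, suppose $\mathcal{H} \subseteq L^2(G)$ is a left-invariant RKH subspace. First I would apply the Riesz representation theorem to the bounded evaluation functionals $\mathcal{E}_g$ to obtain, for each $g \in G$, a vector $\varphi_g \in \mathcal{H}$ with $f(g) = \langle f, \varphi_g\rangle$ for all $f \in \mathcal{H}$. Left-invariance of $\mathcal{H}$ under the (unitary) left regular representation $\lambda$ then forces $\varphi_{hg} = \lambda(h)\varphi_g$: for $f \in \mathcal{H}$ one has $f(hg) = (\lambda(h^{-1})f)(g) = \langle \lambda(h^{-1})f, \varphi_g\rangle = \langle f, \lambda(h)\varphi_g\rangle$, while also $f(hg) = \langle f, \varphi_{hg}\rangle$. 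Setting $\varphi := \varphi_e \in \mathcal{H}$, this gives $\varphi_g(x) = (\lambda(g)\varphi)(x) = \varphi(g^{-1}x)$, so the reproducing identity becomes
\[
f(g) \;=\; \int_G f(x)\,\overline{\varphi(g^{-1}x)}\,\dd x \;=\; (f * \varphi^*)(g), \qquad f \in \mathcal{H},
\]
with $\varphi^*(y) = \overline{\varphi(y^{-1})}$. Specializing $f = \varphi$ yields $\varphi = \varphi * \varphi^*$; writing out $\varphi(g^{-1}) = \langle \varphi, \varphi_{g^{-1}}\rangle$, conjugating, and changing variables (using invariance of Haar measure) gives $\overline{\varphi(g^{-1})} = \varphi(g)$, i.e. $\varphi^* = \varphi$. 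Hence $\varphi$ is a self-adjoint convolution idempotent. Moreover $\mathcal{H} \subset C(G)$, since $g \mapsto \varphi_g = \lambda(g)\varphi$ is norm-continuous by strong continuity of $\lambda$, so each $f(g) = \langle f, \varphi_g\rangle$ is continuous in $g$.

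It remains, for this direction, to identify $\mathcal{H}$ with $L^2(G) * \varphi$. The inclusion $\mathcal{H} \subseteq L^2(G) * \varphi$ is immediate from $f = f * \varphi^* = f * \varphi$. For the reverse inclusion, let $P$ be the orthogonal projection of $L^2(G)$ onto $\mathcal{H}$; I claim $Pf = f * \varphi$ for every $f \in L^2(G)$ (the right-hand side being a well-defined bounded function by Cauchy--Schwarz, as $\varphi \in L^2(G)$). For $f \in \mathcal{H}$ this is the reproducing identity. For $f \in \mathcal{H}^\perp$ one writes $(f * \varphi)(g) = \langle f, \tau_g\rangle$ with $\tau_g(x) = \overline{\varphi(x^{-1}g)}$, and the self-adjointness $\varphi^* = \varphi$ forces $\tau_g(x) = \varphi(g^{-1}x) = \varphi_g(x)$, so $\tau_g \in \mathcal{H}$ and $f * \varphi \equiv 0$. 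Decomposing an arbitrary $f$ along $\mathcal{H} \oplus \mathcal{H}^\perp$ proves the claim, whence $L^2(G) * \varphi = \operatorname{ran} P = \mathcal{H}$.

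\emph{For the converse}, suppose $\varphi \in L^2(G)$ with $\varphi = \varphi * \varphi^* = \varphi^*$. Granting that right convolution $R_\varphi : f \mapsto f * \varphi$ defines a \emph{bounded} operator on $L^2(G)$ (the point discussed below), we get $R_\varphi^2 = R_\varphi$ from $\varphi * \varphi = \varphi$ and $R_\varphi^* = R_{\varphi^*} = R_\varphi$ since the adjoint of right convolution is right convolution by the involution; so $R_\varphi$ is an orthogonal projection. Consequently $\mathcal{H} := L^2(G) * \varphi = \operatorname{ran} R_\varphi$ is a closed subspace of $L^2(G)$; it is $\lambda$-invariant because $\lambda(g)(f * \varphi) = (\lambda(g)f) * \varphi$; and it is a RKHS because every $h \in \mathcal{H}$ satisfies $h = h * \varphi$, so $h(g) = \langle h, \tau_g\rangle$ with $\tau_g(x) = \overline{\varphi(x^{-1}g)}$ and $\|\tau_g\|_{2} = \|\varphi\|_{2}$ by unimodularity, giving $|h(g)| \le \|\varphi\|_2\,\|h\|_2$ uniformly in $g$. (The same identity $h = h*\varphi$ also yields $\mathcal{H} \subset C(G)$.)

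\emph{The main obstacle} is precisely the boundedness of $R_\varphi$ invoked above: in general convolution does not map $L^2(G) \times L^2(G)$ into $L^2(G)$, so this has to be extracted from the idempotent structure. I would argue on the dense subspace $C_c(G)$, where $f * \varphi \in L^2(G)$ by Young's inequality ($L^1 * L^2 \subseteq L^2$), and a Fubini computation using $\varphi * \varphi^* = \varphi$ gives $\|f*\varphi\|_2^2 = \langle f*\varphi, f*\varphi\rangle = \langle f, (f*\varphi)*\varphi^*\rangle = \langle f, f*\varphi\rangle \le \|f\|_2\,\|f*\varphi\|_2$, hence $\|f*\varphi\|_2 \le \|f\|_2$. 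Thus $R_\varphi$ is a contraction on $C_c(G)$ and extends to a contraction on $L^2(G)$; a further Cauchy--Schwarz estimate shows the extension still agrees with the pointwise convolution $f \mapsto f*\varphi$. Throughout, unimodularity of $G$ is what makes the inversion $x \mapsto x^{-1}$ and right translations measure-preserving, so it enters exactly at the norm identities $\|\tau_g\|_2 = \|\varphi\|_2$ and in the variable changes of the forward direction.
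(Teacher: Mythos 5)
The paper does not actually prove this proposition---it is imported by citation from F\"uhr \cite{fuhr2005abstract}---so there is no in-paper argument to compare against. Your proof is correct and is essentially the standard argument behind that citation: Riesz representation plus $\lambda$-covariance of the point evaluations for the forward direction, and the idempotent/involution identities to realize $f \mapsto f*\varphi$ as the orthogonal projection onto $\mathcal{H}$; you also correctly isolate and resolve the one genuinely delicate point, namely that boundedness of right convolution by the $L^2$ (not $L^1$) function $\varphi$ must be extracted from $\varphi * \varphi^* = \varphi$ on a dense subspace rather than from Young's inequality alone. (A minor quibble: the change of variables in your self-adjointness computation uses only left invariance of the Haar measure; unimodularity enters where you say it does elsewhere, in $\|\tau_g\|_2 = \|\varphi\|_2$ and the pointwise well-definedness of $f*\varphi$ for $f \in L^2(G)$.)
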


\begin{example}\label{ex:R}
Consider the real line $G = \mathbb{R}$ and suppose $\mathcal{H} \subseteq L^2(\mathbb{R})$ is a left-invariant RKH subspace with kernel $\varphi \in \mathcal{H}$. The calculation in \eqref{eq:RKHSintegral2} with $V = \mathbb{C}$ shows that, for all $f \in \mathcal{H}$,
\begin{equation}
f(x) = \int_{-\infty}^\infty \overline{\varphi(y-x)} f(y) \ \dd y = \langle f, \lambda(x)\varphi \rangle = (f * \varphi^*)(x).
\end{equation}
Since the regular representation $\lambda$ is continuous, $f$ must be continuous, so $\mathcal{H} \subset C(\mathbb{R})$. Setting $f = \varphi$ shows that the kernel is a self-adjoint convolution idempotent:
\begin{equation}
\varphi = \varphi * \varphi^* = (\varphi * \varphi^*)^* = \varphi^*.
\end{equation}
Combining the Plancherel transform on $L^2(\mathbb{R})$ (see Theorem \ref{thm:Plancherel} and Section \ref{subsubsec:abelian}) with the convolution theorem in Fourier analysis, we observe that, for all $f \in \mathcal{H}$,
\begin{equation}\label{eq:Rbandlimit}
\hat{f} = \widehat{f * \varphi} = \hat{f}\hat{\varphi}.
\end{equation}
In particular, $\hat{\varphi} = \hat{\varphi}^2$, so $\hat{\varphi}$ is the characteristic function $1_E$ on a subset $E \subset \widehat{\mathbb{R}} \simeq \mathbb{R}$. Inserting $\hat{\varphi} = 1_E$ in \eqref{eq:Rbandlimit} immediately tells us that $\supp(\hat{f}) \subset E$, so $\mathcal{H}$ is a space of bandlimited functions. Moreover, the set $E$ has finite Lebesgue measure according to the Plancherel theorem: $\mathrm{vol}(E) = \| 1_E\|_2^2 = \| \varphi\|_2^2 < \infty$.
\end{example}
This example illustrates that any measurable subset $E \subset \mathbb{R}$ with finite Lebesgue measure induces a left-invariant closed RKH subspace
\begin{equation}
\mathcal{H}_E = \left\{ f \in L^2(\mathbb{R}) \ \middle| \ \supp(\hat{f}) \subset E\right\} = L^2(\mathbb{R}) * \varphi_E,
\end{equation}
$\varphi_E$ being the inverse Plancherel transform of $1_E$ \cite[2.63-2.65]{fuhr2005abstract}. This relation between left-invariant RKH subspaces $\mathcal{H} \subseteq L^2(G)$ and bandlimited functions generalizes to unimodular Lie groups $G$ of type I, although the necessary harmonic analysis becomes significantly more advanced. Going into detail on this rather technical subject would distract from the topic at hand, so we refer curious readers to the relevant literature instead \cite{fuhr2005abstract}. Let us take the short route of stating a theorem on the direct integral decomposition of the left regular representation $\lambda$ and its commutant
\begin{equation}
\lambda(G)' = \left\{ T \in \mathcal{B}(L^2(G)) \ \middle| \ T \circ \lambda(g) = \lambda(g) \circ T \text{ for all } g \in G\right\},
\end{equation}
and discuss a few consequences of this decomposition, before restricting attention to two important cases where we can be more explicit: Abelian and compact groups.

\begin{definition}[{\cite[\S 3.5]{fuhr2005abstract}}]
The \emph{operator-valued Fourier transform} on $G$ maps each $f \in L^1(G)$ to the family $\mathcal{F}(f) = (\hat{f}(\gamma))_{\gamma \in \widehat{G}}$, where each $\hat{f}(\gamma) \in \mathcal{B}(V_\gamma)$ is a bounded operator given by the Bochner integral
\begin{equation}\label{eq:Fourier}
\hat{f}(\gamma) = \int_G f(g) \pi_\gamma(g) \ \dd g.
\end{equation}
\end{definition}

\begin{theorem}[{\cite[Theorem 3.48]{fuhr2005abstract}}]\label{thm:Plancherel}
There is a canonical \emph{Plancherel measure} $\nu$ for the unitary dual $\widehat{G}$ with the following properties:
\begin{enumerate}
\item[(a)] $\mathcal{F}$ extends to a unitary operator
\begin{equation}\label{eq:Plancherel}
\mathcal{P} : L^2(G) \to \int_{\widehat{G}}^\oplus V_\gamma \otimes V_\gamma \ \dd \nu(\gamma),
\end{equation}
called the \emph{Plancherel transform} of $G$.

\item[(b)] $\mathcal{P}$ implements the following unitary equivalences:
\begin{alignat}{1}
\lambda \simeq &\int_{\widehat{G}}^\oplus \pi_\gamma \otimes \Id \ \dd \nu(\gamma)\\
\lambda(G)' \simeq &\int_{\widehat{G}}^\oplus \Id \otimes \mathcal{B}(V_\gamma) \ \dd \nu(\gamma)
\end{alignat}
\end{enumerate}
\end{theorem}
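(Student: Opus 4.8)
The plan is to realize the decomposition as the disintegration of the group von Neumann algebra over its center. Write $\mathcal{L} = \lambda(G)'' \subseteq \mathcal{B}(L^2(G))$ for the left group von Neumann algebra and $\mathcal{R} = \rho(G)''$ for the right one; since $G$ is unimodular, the commutation theorem gives $\mathcal{L}' = \mathcal{R}$, so the center $\mathcal{Z} = \mathcal{L}\cap\mathcal{L}' = \mathcal{L}\cap\mathcal{R}$ is an abelian von Neumann algebra on the separable Hilbert space $L^2(G)$, hence $\mathcal{Z}\cong L^\infty(X,\mu)$ for a standard measure space. First I would invoke Dixmier's disintegration theory to write $L^2(G)\cong\int_{X}^{\oplus}\mathcal{H}_x\,\dd\mu(x)$ over the Gelfand spectrum of $\mathcal{Z}$, under which $\mathcal{L}$ and $\mathcal{L}'$ become algebras of decomposable operators with factorial fibers $\mathcal{L}_x$, $\mathcal{L}_x'$. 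The type I hypothesis on $G$ is exactly the statement that $\mathcal{L}$ is a type I von Neumann algebra, so for $\mu$-a.e.\ $x$ the fiber $\mathcal{L}_x$ is a type I factor $\cong\mathcal{B}(V_x)$; equivalently $\lambda$ restricted to $\mathcal{H}_x$ is quasi-equivalent to a single irreducible $\pi_x$, and, being homogeneous of multiplicity $\dim V_x$, it is unitarily equivalent to $\pi_x\otimes\Id$ on $\mathcal{H}_x\cong V_x\otimes\overline{V_x}\cong\mathrm{HS}(V_x)$. The Borel map $x\mapsto[\pi_x]$ is then an isomorphism of $X$ onto a conull Borel subset of $\widehat{G}$, and I would \emph{define} the Plancherel measure $\nu$ to be the pushforward of $\mu$, whose normalization is fixed below.

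Next I would check that this disintegration is implemented by the operator-valued Fourier transform. For $f\in C_c(G)\subseteq L^1(G)\cap L^2(G)$ the convolution operator $\lambda(f)\in\mathcal{L}$ has fiber components $\int_G f(g)\pi_\gamma(g)\,\dd g = \hat f(\gamma)$; transported through the canonical identification of $L^2(G)$ with the GNS space of the trace $\tau$ on $\mathcal{L}$ (under which $f\leftrightarrow\lambda(f)$), the disintegration of $\mathcal{L}$ becomes exactly $f\mapsto(\hat f(\gamma))_\gamma = \mathcal{F}(f)$. Here $\tau$ is the canonical semifinite faithful trace determined by $\tau(\lambda(f)^*\lambda(f)) = \|f\|_2^2$ for $f\in C_c(G)$ — it exists precisely because $G$ is unimodular — and it disintegrates as $\tau = \int_{\widehat{G}}\operatorname{Tr}_{V_\gamma}\,\dd\nu(\gamma)$, which pins down the normalization of $\nu$. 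Combining the two facts gives the Plancherel formula $\|f\|_2^2 = \tau(\lambda(f)^*\lambda(f)) = \int_{\widehat{G}}\operatorname{Tr}\big(\hat f(\gamma)^*\hat f(\gamma)\big)\,\dd\nu(\gamma) = \int_{\widehat{G}}\|\hat f(\gamma)\|_{\mathrm{HS}}^2\,\dd\nu(\gamma)$, so $\mathcal{F}$ is isometric on the dense subspace $C_c(G)$ and extends to an isometry $\mathcal{P}\colon L^2(G)\to\int_{\widehat{G}}^{\oplus}\mathrm{HS}(V_\gamma)\,\dd\nu(\gamma)$; the target is the space written $\int_{\widehat{G}}^{\oplus}V_\gamma\otimes V_\gamma\,\dd\nu(\gamma)$ under the identification of $\overline{V_\gamma}$ with $V_\gamma$. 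Surjectivity of $\mathcal{P}$ then follows because its image is a closed subspace stable under all decomposable operators — the left and right multiplications by $\pi_\gamma(G)$ generate these fiberwise by irreducibility of $\pi_\gamma$ — hence it must be everything.

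The intertwining relations in (b) are a direct computation with the Fourier transform. Left-invariance of Haar measure gives $\widehat{\lambda(h)f}(\gamma) = \pi_\gamma(h)\hat f(\gamma)$, i.e.\ $\mathcal{P}\,\lambda(h)\,\mathcal{P}^{-1}$ is left multiplication by $\pi_\gamma(h)$ on $\mathrm{HS}(V_\gamma)$, which under $\mathrm{HS}(V_\gamma)\cong V_\gamma\otimes\overline{V_\gamma}$ is exactly $\pi_\gamma(h)\otimes\Id$; integrating yields $\lambda\simeq\int_{\widehat{G}}^{\oplus}\pi_\gamma\otimes\Id\,\dd\nu(\gamma)$. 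Similarly (again using unimodularity) $\widehat{\rho(h)f}(\gamma) = \hat f(\gamma)\pi_\gamma(h)^{-1}$, so $\mathcal{P}$ carries $\rho$ to right multiplications, which generate $\Id\otimes\mathcal{B}(\overline{V_\gamma})$ fiberwise; since $\lambda(G)' = \rho(G)''$, passing to bicommutants inside the decomposable operators gives $\lambda(G)'\simeq\int_{\widehat{G}}^{\oplus}\Id\otimes\mathcal{B}(V_\gamma)\,\dd\nu(\gamma)$. This establishes (a) and (b).

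\textbf{The main obstacle} I anticipate is the simultaneous construction and normalization of the one measure $\nu$ that makes both the disintegration of $L^2(G)$ over $\mathcal{Z}$ work \emph{and} makes $\mathcal{F}$ an isometry (the Plancherel formula). This is the technical heart, and it is where the type I and unimodularity hypotheses are genuinely used, via Dixmier's disintegration theory for von Neumann algebras with separable predual and the existence and fiberwise uniqueness of the canonical trace on the type I algebra $\mathcal{L}$. By comparison, the intertwining identities, surjectivity, and passage to commutants are formal once that machinery is in place; for a full account I would follow \cite{fuhr2005abstract}.
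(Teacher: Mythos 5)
The paper does not prove this statement; it is quoted verbatim from F\"uhr's monograph (Theorem 3.48 there), which in turn rests on the Dixmier--Mautner--Segal central disintegration of the group von Neumann algebra. Your sketch reproduces exactly that standard argument --- disintegration of $\lambda(G)''$ over its center, type~I hypothesis to get $\mathcal{B}(V_\gamma)$-fibers, normalization of $\nu$ via the canonical trace $\tau(\lambda(f)^*\lambda(f))=\|f\|_2^2$, and the formal intertwining computations --- so it is correct in outline and consistent with the cited source, with the expected technical debts (standardness of the Mackey Borel structure on $\widehat{G}$, measurable field choices) deferred to the references as you acknowledge.
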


Observe that if $\mathcal{H} \subseteq L^2(G)$ is a left-invariant closed subspace, then the projection $P : L^2(G) \to \mathcal{H}$ commutes with the left-regular representation and is thus an element of the commutant $\lambda(G)'$.  It therefore has a direct integral decomposition
\begin{equation}\label{eq:projection}
P = \int_{\widehat{G}}^\oplus \Id \otimes \hat{P}_\gamma \ \dd \nu(\gamma),
\end{equation}
where $\hat{P}_\gamma \in \mathcal{B}(V_\gamma)$ for each $\gamma \in \widehat{G}$.

\begin{theorem}[{\cite[Theorem 4.22, Proposition 2.40]{fuhr2005abstract}}]\label{thm:bandwidth}
Suppose that $\mathcal{H} \subseteq L^2(G)$ is a left-invariant closed subspace with equation \eqref{eq:projection} denoting the projection onto $\mathcal{H}$. Then $\mathcal{H}$ is a RKHS iff
\begin{equation}\label{eq:bandwidth}
\int_{\widehat{G}} \rank(\hat{P}_\gamma) \ \dd \nu(\gamma) < \infty.
\end{equation}
\end{theorem}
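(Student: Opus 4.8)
The plan is to reduce the RKHS property of $\mathcal{H}$ to the single condition that its reproducing kernel $\varphi \in \mathcal{H}$ lies in $L^2(G)$ — which is automatic since $\varphi \in \mathcal{H} \subseteq L^2(G)$ — and then to reinterpret $\|\varphi\|_2^2$ on the Plancherel side via the direct integral decomposition \eqref{eq:projection}. The key observation is that for a left-invariant closed subspace $\mathcal{H}$, being a RKHS is equivalent to the reproducing property holding in the form \eqref{eq:RKHSintegral2}, i.e.\ to the existence of an idempotent $\varphi \in L^2(G)$ with $\mathcal{H} = L^2(G)*\varphi$ and $P = (\cdot)*\varphi$; this is precisely the content of Proposition \ref{thm:RKHS_and_conv}. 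So the statement to prove is: the projection $P$ onto $\mathcal{H}$ is convolution by an $L^2$ function if and only if \eqref{eq:bandwidth} holds.

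First I would push $P$ through the Plancherel transform $\mathcal{P}$ of Theorem \ref{thm:Plancherel}. Under $\mathcal{P}$, convolution operators $(\cdot)*\varphi$ correspond to the operators $\mathrm{Id}\otimes \hat\varphi(\gamma)$ acting fiberwise on $\int^\oplus_{\widehat G} V_\gamma\otimes V_\gamma\,\dd\nu(\gamma)$, and since $P$ is a self-adjoint idempotent in $\lambda(G)'$, each fiber operator $\hat P_\gamma \in \mathcal{B}(V_\gamma)$ is itself a self-adjoint projection. The point is then purely Hilbert-space-theoretic on the big direct integral: a decomposable projection $P=\int^\oplus \mathrm{Id}\otimes\hat P_\gamma\,\dd\nu$ is implemented by a vector (equivalently, lies in the Hilbert--Schmidt class relative to the fiberwise trace) exactly when $\int_{\widehat G}\dim V_\gamma\cdot\rank(\hat P_\gamma)\,\dd\nu(\gamma)<\infty$ — wait, one must be careful: the relevant norm is $\|\varphi\|_2^2 = \int_{\widehat G}\|\hat\varphi(\gamma)\|_{HS}^2\,\dd\nu(\gamma)$ by the Plancherel theorem, and $\hat\varphi(\gamma)=\hat P_\gamma$ has Hilbert--Schmidt norm squared equal to $\rank(\hat P_\gamma)$ since it is an orthogonal projection of that rank. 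Thus $\|\varphi\|_2^2 = \int_{\widehat G}\rank(\hat P_\gamma)\,\dd\nu(\gamma)$, and finiteness of this integral is exactly the condition for $\varphi$ — defined a priori only as a distributional convolution kernel — to actually be an element of $L^2(G)$, hence (by Proposition \ref{thm:RKHS_and_conv}) for $\mathcal{H}$ to be a RKHS.

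The main obstacle, and the part I would treat most carefully, is the ``only if'' direction together with the precise identification of $\varphi$: a priori $P$ is just a bounded projection in $\lambda(G)'$, and there is no reason its convolution kernel is a function rather than a genuine distribution or measure. I would argue that $P\in\lambda(G)'$ always has a kernel in the von Neumann algebra sense, that $\mathcal{H}$ being a RKHS forces the evaluation functionals $f\mapsto f(g)$ to be bounded on $\mathcal{H}$, and then Riesz representation produces $\varphi_g\in\mathcal{H}\subseteq L^2(G)$ with $f(g)=\langle f,\varphi_g\rangle$; left-invariance gives $\varphi_g=\lambda(g)\varphi_e$ and one identifies $\varphi=\varphi_e^*$ (cf.\ \eqref{eq:RKHSintegral1}--\eqref{eq:RKHSintegral2}), now genuinely in $L^2(G)$, whose Plancherel norm is computed as above. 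Conversely, if \eqref{eq:bandwidth} holds, one defines $\varphi$ by declaring $\hat\varphi(\gamma)=\hat P_\gamma$, checks via Plancherel that this lies in $L^2(G)$, verifies it is a self-adjoint convolution idempotent, and invokes Proposition \ref{thm:RKHS_and_conv} to conclude $\mathcal{H}=L^2(G)*\varphi$ is a RKHS. Both directions are really two readings of the single identity $\|\varphi\|_2^2=\int_{\widehat G}\rank(\hat P_\gamma)\,\dd\nu(\gamma)$; since the cited Theorem 4.22 and Proposition 2.40 of \cite{fuhr2005abstract} do the heavy lifting, I expect the write-up to be short, with the only genuine care needed in justifying that the abstract kernel of $P$ coincides with the Riesz-representer function.
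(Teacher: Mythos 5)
The paper does not actually prove Theorem~\ref{thm:bandwidth}; it is quoted as a black-box citation of \cite[Theorem 4.22, Proposition 2.40]{fuhr2005abstract}. Your proposal is therefore supplying a proof where the paper supplies none, and the argument you give is the correct (and essentially the cited) one: reduce via Proposition~\ref{thm:RKHS_and_conv} to the question of whether the projection $P$ is right convolution by a self-adjoint idempotent $\varphi\in L^2(G)$, then use the Plancherel theorem and the fact that $\hat\varphi(\gamma)=\hat P_\gamma$ is an orthogonal projection to get the single identity $\|\varphi\|_2^2=\int_{\widehat G}\|\hat P_\gamma\|_{HS}^2\,\dd\nu(\gamma)=\int_{\widehat G}\rank(\hat P_\gamma)\,\dd\nu(\gamma)$, read in both directions. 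Your self-correction from $\dim V_\gamma\cdot\rank(\hat P_\gamma)$ to $\rank(\hat P_\gamma)$ is the right call: the kernel lives in the fiber $V_\gamma\otimes V_\gamma\simeq HS(V_\gamma)$ and its fiberwise HS norm sees only $\hat P_\gamma$, not $\Id\otimes\hat P_\gamma$. The two places where a full write-up would need genuine care are exactly the ones you flag, plus one you do not: (i) in the converse direction, showing that the field $\gamma\mapsto\hat P_\gamma$ is a measurable field of operators so that $\mathcal P^{-1}$ may be applied to it, and (ii) justifying that $f\mapsto f*\varphi$ for $\varphi\in L^2(G)$, which a priori only produces a bounded continuous function, is a bounded operator on $L^2(G)$ coinciding with $P$ (this is where the idempotency of $\varphi$ enters and is the content of the surrounding results in \cite{fuhr2005abstract}). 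Neither point is a gap in the strategy, only in the level of detail; the proposal is sound.
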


We interpret this theorem as a bandwidth restriction, similar to Example \ref{ex:R}. The integrand in \eqref{eq:bandwidth} is an integer-valued function on $\widehat{G}$, so the integral is finite only if the projection \eqref{eq:projection} is supported on a set $E \subseteq \widehat{G}$ of finite Plancherel measure,
\begin{equation}\label{eq:E}
\nu\Big( \underbrace{\overline{\left\{ \gamma \in \widehat{G} : \hat{P}_\gamma \neq 0\right\}}}_{E}\Big) = \int_{\widehat{G}} 1_E(\gamma) \ \dd \nu(\gamma) \leq \int_{\widehat{G}} \rank(\hat{P}_\gamma) \ \dd\nu(\gamma).
\end{equation}

That is, the left-invariant RKH subspaces $\mathcal{H} \subseteq L^2(G)$ are precisely those subspaces whose elements are bandlimited on a set $E \subseteq \widehat{G}$, in the sense that, for all $f \in \mathcal{H}$ and each equivalence class $\gamma \not\in E$,
\begin{equation}\label{eq:bandwidth2}
\hat{f}(\gamma) = \widehat{Pf}(\gamma) = \hat{f}(\gamma) \circ \hat{P}_\gamma = 0.
\end{equation}

\begin{remark}
The second equality in \eqref{eq:bandwidth2} is \cite[Corollary 4.17]{fuhr2005abstract}.
\end{remark}

Bandlimited functions are thus central to the theory of RKHS and, by extension, to the mathematical theory of GCNNs. Indeed, by extending the concept of bandwidth to feature maps, through \eqref{eq:RKHS_inclusion}, we obtain the following rephrasing of Theorem \ref{thm:main}.

\begin{corollary}\label{corr:main2}
Let $G$ be a unimodular Lie group of type I, let $K \leq G$ be a compact subgroup, and consider homogeneous vector bundles $E_\rho,E_\sigma$ over $G/K$. Suppose that
\begin{equation}
\phi : L^2(G;\rho) \to L^2(G;\sigma),
\end{equation}
is a $G$-equivariant layer. If $\phi$ maps into a space of bandlimited functions, then $\phi$ is a convolutional layer.
\end{corollary}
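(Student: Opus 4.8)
The plan is to deduce Corollary~\ref{corr:main2} directly from Theorem~\ref{thm:main}, the point being that, in the terminology of this section, a ``space of bandlimited functions'' inside $L^2(G;\sigma)$ is exactly a left-invariant RKH subspace. So the first thing I would do is pin this down on the scalar level: by Theorem~\ref{thm:bandwidth} and the discussion around \eqref{eq:projection}--\eqref{eq:E}, a left-invariant closed subspace $\mathcal{H}_0 \subseteq L^2(G)$ is a RKHS precisely when the direct-integral components $\hat P_\gamma$ of its projection satisfy the rank-integrability condition \eqref{eq:bandwidth}; equivalently, by \eqref{eq:bandwidth2}, its elements are bandlimited to the set $E$ of \eqref{eq:E}, which then has finite Plancherel measure (and for $G$ whose irreducibles are one-dimensional, as in Example~\ref{ex:R}, finite Plancherel measure of $E$ already suffices). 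Hence ``left-invariant bandlimited subspace'' and ``left-invariant RKH subspace'' are synonyms within $L^2(G)$.

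Next I would transport this to feature maps. Given such a subspace $\mathcal{H}_0 \subseteq L^2(G)$, Lemma~\ref{lemma:L2equiv} provides the unitary $A$ intertwining $\lambda \otimes \Id_{V_\sigma}$ with $\Lambda$, so $A(\mathcal{H}_0 \otimes V_\sigma) \subseteq L^2(G,V_\sigma)$ is left-invariant and closed; it is a RKHS because, in an orthonormal basis $e_1,\dots,e_{\dim V_\sigma}$ of $V_\sigma$, the components of $\sum_i f_i e_i$ are the $f_i \in \mathcal{H}_0$ and evaluation is therefore bounded componentwise. Since $L^2(G;\sigma) \subseteq L^2(G,V_\sigma)$ is the closed $\Lambda$-invariant subspace on which $\Lambda$ restricts to $\mathrm{Ind}_K^G\sigma$, and any closed subspace of a RKHS is again a RKHS for the restricted evaluation operators, the intersection
\begin{equation}
\mathcal{H} := A(\mathcal{H}_0 \otimes V_\sigma) \cap L^2(G;\sigma)
\end{equation}
is a left-invariant RKH subspace of $L^2(G;\sigma)$ --- this is exactly the inclusion \eqref{eq:RKHS_inclusion}. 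Concretely, $\mathcal{H}$ consists of those feature maps all of whose component functions lie in $\mathcal{H}_0$, i.e.\ are bandlimited to $E$; this is what ``space of bandlimited functions'' means at the level of feature maps. With this identification, the hypothesis that $\phi$ maps into a space of bandlimited functions says precisely that $\phi$ maps into such an $\mathcal{H}$, so Theorem~\ref{thm:main} applies verbatim and $\phi$ is a convolutional layer.

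The step I expect to carry all the weight is the transport in the previous paragraph: one must verify both that the vector-valued bandlimited subspace built from $\mathcal{H}_0$ is a \emph{genuine} RKHS and not merely a bandlimited subspace, and that this RKHS property passes to the closed subspace $L^2(G;\sigma) \subseteq L^2(G,V_\sigma)$. Both are elementary --- they rest only on finite-dimensionality of $V_\sigma$ and on boundedness of orthogonal projections --- but they are exactly what makes it legitimate to state the corollary with the concrete hypothesis ``bandlimited'' in place of ``left-invariant RKH subspace''. I would also take care to record, for nonabelian $G$, that the correct bandwidth condition is the rank-integrability \eqref{eq:bandwidth}, which is genuinely stronger than finiteness of $\nu(E)$.
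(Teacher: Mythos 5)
Your proposal is correct and follows exactly the route the paper intends: the paper treats Corollary~\ref{corr:main2} as a rephrasing of Theorem~\ref{thm:main} obtained by identifying bandlimited subspaces with left-invariant RKH subspaces via Theorem~\ref{thm:bandwidth} and transporting to feature maps through \eqref{eq:RKHS_inclusion}, which is precisely your argument. You merely make explicit the details the paper leaves implicit (componentwise boundedness of evaluation on $A(\mathcal{H}_0\otimes V_\sigma)$, stability of the RKHS property under passage to closed subspaces, and the caveat that for nonabelian $G$ the correct condition is rank-integrability \eqref{eq:bandwidth} rather than mere finiteness of $\nu(E)$), all of which are accurate.
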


\begin{remark}
The relevance of bandwidth for convolutional layers has already been recognized in the case of azimuthally equivariant linear operators on $L^2(S^2)$ \cite{toft2021azimuthal}. In our setting, these operators translate to certain $G$-equivariant layers
\begin{equation}
\Phi : L^2(E_\rho) \to L^2(E_\sigma),
\end{equation}
when $G = SO(3)$, $K = SO(2)$, and $\rho,\sigma$ are the trivial representation. 
\end{remark}

\begin{remark}
Some implementations of GCNNs use Fourier transforms and a variant of the convolution theorem $\widehat{f_1 * f_2} = \hat{f}_1 \hat{f}_2$ to compute convolutional layers \cite{esteves20173d,kondor2018clebsch,toft2021azimuthal,weiler20183d}. Feature maps $f$ are then represented by their Fourier transform $\hat{f}$ which, for numerical reasons, is only approximated up to a finite bandlimit. That is, bandwidth is already being used in implementations.
\end{remark}

\subsubsection{Abelian groups}\label{subsubsec:abelian} The irreducible representations $\gamma \in \widehat{G}$ of any abelian group $G$ are 1-dimensional, and may thus be identified with their character $\chi_\gamma = \tr \pi_\gamma$. There are several useful consequences of this fact.

First, the unitary dual $\widehat{G}$ is now the set of continuous homomorphisms $\chi : G \to \mathbb{T}$, where $\mathbb{T}$ is the circle group. This is a locally compact group with respect to pointwise multiplication and, as $\gamma \in \widehat{G}$ is unitary, we may write $\chi_\gamma= e^{i\xi_\gamma }$ where $\xi_\gamma : G \to \mathbb{R}$. The Fourier transform then takes the more familiar form
\begin{equation}
\hat{f}(\gamma) = \int_G f(g) e^{-i\xi_\gamma(g)} \ \dd g,
\end{equation}
for $ f \in L^1(G) \cap L^2(G)$. Moreover, the Haar measure on $\widehat{G}$ can be made to coincide with the Plancherel measure such that \eqref{eq:Plancherel} becomes a unitary equivalence
\begin{equation}
\mathcal{P} : L^2(G) \to L^2(\widehat{G}).
\end{equation}

Another consequence of the fact that irreducible representations are 1-dimensional, is that the integrand in \eqref{eq:bandwidth} takes values in $\{0,1\}$ and \eqref{eq:E} becomes an equality. By the same arguments as in Example \ref{ex:R}, we see that the left-invariant RKH subspaces $\mathcal{H} \subset L^2(G)$ are the spaces of bandlimited functions, $\supp(\hat{f}) \subset E$, for subsets $E \subset \widehat{G}$ of finite Haar/Plancherel measure. Also, the kernel $\varphi_E \in \mathcal{H}$ is the inverse Plancherel transform of the characteristic function $1_E$.

\begin{corollary}\label{corr:LCA_discrete} If $G$ is a discrete and abelian group, then any $G$-equivariant layer is a convolutional layer.
\end{corollary}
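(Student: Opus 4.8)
The plan is to deduce this directly from Theorem \ref{thm:main} (or its rephrasing, Corollary \ref{corr:main2}) by observing that, for discrete $G$, the ambient space $L^2(G;\sigma)$ is \emph{itself} a left-invariant RKH subspace, so that \emph{every} $G$-equivariant layer $\phi : L^2(G;\rho) \to L^2(G;\sigma)$ automatically maps into such a subspace.

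First I would verify the three clauses of Definition \ref{def:RKHS} for $\mathcal{H} := L^2(G;\sigma) \subseteq L^2(G,V_\sigma)$. Closedness and $\Lambda$-invariance were already recorded in Section \ref{subsec:indrep}: $L^2(G;\sigma)$ is a closed, $\Lambda$-invariant subspace of $L^2(G,V_\sigma)$, and $\mathrm{Ind}_K^G\sigma$ is exactly the restriction of $\Lambda$ to it. The only substantive point is the reproducing property, and here discreteness does all the work: the Haar measure of a discrete group is counting measure, so for every $g \in G$ and $f \in L^2(G;\sigma)$,
\begin{equation}
\|\mathcal{E}_g f\|_\sigma^2 = \|f(g)\|_\sigma^2 \leq \sum_{g' \in G}\|f(g')\|_\sigma^2 = \|f\|_{L^2(G;\sigma)}^2,
\end{equation}
so each evaluation operator is bounded. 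With $\mathcal{H} = L^2(G;\sigma)$ in hand, Theorem \ref{thm:main} applies and gives that $\phi$ is a convolutional layer.

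Alternatively, to stay inside the bandwidth picture of this subsection, I would run the argument through Theorem \ref{thm:bandwidth} and \eqref{eq:RKHS_inclusion}: for $G$ discrete abelian the Pontryagin dual $\widehat{G}$ is compact, so the Plancherel measure $\nu$ is finite; the projection onto $\mathcal{H}=L^2(G)$ is the identity, whose direct-integral fibres $\hat P_\gamma = \Id_{V_\gamma}$ have rank $\dim V_\gamma = 1$, whence $\int_{\widehat{G}} \rank(\hat P_\gamma)\,\dd\nu(\gamma) = \nu(\widehat{G}) < \infty$; Theorem \ref{thm:bandwidth} then certifies that $L^2(G)$ is a RKHS (consistent with Example \ref{ex:identity}, where it fails precisely in the \emph{non}discrete case), and \eqref{eq:RKHS_inclusion} upgrades this to a left-invariant RKH subspace of $L^2(G;\sigma)$.

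I do not expect a genuine obstacle: the content is the counting-measure estimate above together with the bookkeeping that $L^2(G;\sigma)$ is a closed $\Lambda$-invariant subspace of $L^2(G,V_\sigma)$, both of which are essentially immediate from material already in the excerpt. The one thing worth flagging is that abelianness is not actually used in the first route — discreteness alone suffices — so the hypothesis in the statement is there only to keep the proof within the abelian framework of Section \ref{subsec:RKHS}, where the second, bandwidth-based argument lives.
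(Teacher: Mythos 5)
Your proposal is correct, and your second (bandwidth) route is essentially the paper's own proof: the paper argues that $\widehat{G}$ is compact for discrete abelian $G$ and the integrand of \eqref{eq:bandwidth} is bounded (the irreducibles being one-dimensional), so the integral converges for every left-invariant closed subspace, whence $L^2(G)$ and then $L^2(G;\sigma)$ are RKHS and Theorem \ref{thm:main} applies. Your first route is a genuinely more elementary shortcut that the paper only gestures at in a footnote (the reproducing kernel is the Kronecker delta $\varphi(g)=\delta_{g,e}$): the counting-measure estimate $\|f(g)\|_\sigma^2 \leq \sum_{g'}\|f(g')\|_\sigma^2$ shows every evaluation operator on $L^2(G;\sigma)$ is bounded with norm at most $1$, so $L^2(G;\sigma)$ is itself a left-invariant RKH subspace and Theorem \ref{thm:main} applies with $\mathcal{H}=L^2(G;\sigma)$. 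What the direct route buys is exactly what you flag: it uses neither abelianness nor the type I/Plancherel machinery of Section \ref{subsec:RKHS}, so it proves the stronger statement that every $G$-equivariant layer is convolutional for \emph{any} discrete $G$ (subsuming Corollary \ref{corr:finiteG} as well); what the paper's route buys is that it stays within the bandwidth framework and exhibits the corollary as an instance of the general criterion of Corollary \ref{corr:main2}, with $E=\widehat{G}$ of finite Plancherel measure. Both arguments are sound.
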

\begin{proof}
Discrete groups are unimodular Lie group of type I, so we may use results from the current section. We note that the integral \eqref{eq:bandwidth} converges for any left-invariant, closed subspace $\mathcal{H} \subseteq L^2(G)$, as the unitary dual $\widehat{G}$ is compact when $G$ is discrete and abelian \cite[Proposition 3.1.5]{deitmar2014principles}, and because the integrand is bounded. Consequently, $\mathcal{H} = L^2(G)$ is itself a RKHS,\footnote{In fact, the kernel is simply the Kronecker delta $\varphi(g) = \delta_{g,e}$.} and the same is true for both $L^2(G,V_\sigma) \simeq L^2(G) \otimes V_\sigma$ and its closed, left-invariant subspace $L^2(G;\sigma)$, independently of $K \leq G$ and $(\sigma,V_\sigma)$. The result now follows from Theorem \ref{thm:main}.
\end{proof}

By setting $G = \mathbb{Z}^2$, Corollary \ref{corr:LCA_discrete} establishes that convolutional layers are the only possible translation equivariant layers in the ordinary CNN setting.

\subsubsection{Compact groups} When the group $G$ is compact, all irreducible representations are finite-dimensional. Furthermore, the unitary dual $\widehat{G}$ is discrete and the Plancherel measure on $\widehat{G}$ is simply the counting measure. For these reasons, the integral \eqref{eq:bandwidth} reduces to a discrete sum with finite summands, and converges iff $\hat{P}_\gamma = 0$ for all but finitely many $\gamma \in \widehat{G}$.

\begin{corollary}\label{corr:finiteG}
If $G$ is finite, then any $G$-equivariant layer is a convolutional layer.
\end{corollary}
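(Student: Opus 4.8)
The plan is to reduce to Theorem \ref{thm:main} exactly as in the proof of Corollary \ref{corr:LCA_discrete}, the only genuinely new point being why the bandwidth integral \eqref{eq:bandwidth} converges for \emph{every} left-invariant closed subspace when $G$ is finite. First I would note that a finite group is discrete, hence a unimodular Lie group of type I, so the harmonic analysis of Section \ref{subsec:RKHS} is available. Since $G$ is compact, its unitary dual $\widehat{G}$ is discrete and carries the counting measure as Plancherel measure; since $G$ is moreover finite, $\widehat{G}$ consists of only finitely many classes $\gamma$, each with $\dim V_\gamma < \infty$.

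Given any left-invariant closed subspace $\mathcal{H} \subseteq L^2(G)$, the orthogonal projection $P$ onto $\mathcal{H}$ lies in the commutant $\lambda(G)'$ and therefore admits the decomposition \eqref{eq:projection}. The integral \eqref{eq:bandwidth} is then the finite sum $\sum_{\gamma \in \widehat{G}} \rank(\hat{P}_\gamma)$, whose summands are integers bounded by $\dim V_\gamma < \infty$, so it converges and $\mathcal{H}$ is a RKHS by Theorem \ref{thm:bandwidth}. Applying this with $\mathcal{H} = L^2(G)$ shows that $L^2(G)$ is itself a RKHS, and then---via the unitary equivalence \eqref{eq:unitary_equiv} and the inclusion \eqref{eq:RKHS_inclusion}, just as in Corollary \ref{corr:LCA_discrete}---so are $L^2(G,V_\sigma) \simeq L^2(G) \otimes V_\sigma$ and its closed left-invariant subspace $L^2(G;\sigma)$, independently of $K \leq G$ and $(\sigma,V_\sigma)$. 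A $G$-equivariant layer $\phi : L^2(G;\rho) \to L^2(G;\sigma)$ then maps (trivially) into the left-invariant RKH subspace $\mathcal{H} = L^2(G;\sigma)$, so Theorem \ref{thm:main} applies and $\phi$ is a convolutional layer.

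There is essentially no obstacle here: with $G$ finite the relevant function spaces are finite-dimensional, so the RKHS property is automatic, and one could even argue directly that every evaluation functional on $L^2(G;\sigma)$ is bounded. The only thing to watch is the bookkeeping already carried out in Corollary \ref{corr:LCA_discrete}, namely that the RKHS property transfers from $L^2(G)$ to $L^2(G) \otimes V_\sigma$ and then restricts to the closed subspace $L^2(G;\sigma)$; routing the proof through Theorem \ref{thm:bandwidth} rather than invoking finite-dimensionality directly keeps it parallel to Corollaries \ref{corr:main2}--\ref{corr:LCA_discrete}.
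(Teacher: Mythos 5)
Your proposal is correct and follows essentially the same route as the paper: observe that $\widehat{G}$ is finite so the bandwidth integral \eqref{eq:bandwidth} is a finite sum, conclude via Theorem \ref{thm:bandwidth} that $L^2(G)$ is a RKHS, and then transfer this to $L^2(G;\sigma)$ and invoke Theorem \ref{thm:main} exactly as in Corollary \ref{corr:LCA_discrete}. The extra detail you supply (summands bounded by $\dim V_\gamma$, the finite-dimensionality remark) is consistent with, and slightly more explicit than, the paper's own two-line argument.
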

\begin{proof}
When $G$ is a finite group, $\widehat{G}$ is also finite \cite[Proposition 5.27]{folland2016course} and the integral \eqref{eq:bandwidth} reduces to a finite sum. That is, $L^2(G)$ is a RKHS and the result now follows in the same way as Corollary \ref{corr:LCA_discrete}.
\end{proof}

\section{Discussion}\label{sec:discussion}

In this paper, we have investigated the mathematical foundations of $G$-equivariant convolutional neural networks (GCNNs), which are designed for deep learning tasks exhibiting global symmetry. We presented a basic framework for equivariant neural networks that include both gauge equivariant neural networks and GCNNs as special cases. We also demonstrated how GCNNs can be obtained from homogeneous vector bundles, when $G$ is a unimodular Lie group and $K \leq G$ is a compact subgroup.

In Theorem \ref{thm:main}, we gave a precise criterion for when a given $G$-equivariant layer is, in fact, a convolutional layer. This criterion uses reproducing kernel Hilbert spaces (RKHS) and cannot be circumvented, as shown in Example \ref{ex:identity}. After discussing the relation between RKHS and bandwidth, we were able to reformulate Theorem \ref{thm:main} to get an analogous bandlimit-criterion in Corollary \ref{corr:main2}. In Corollaries \ref{corr:LCA_discrete}-\ref{corr:finiteG}, we showed that the criterion is automatically satisfied when $G$ is discrete abelian or finite, hence all $G$-equivariant layers are convolutional layers for these groups.

One limitation of the current paper, compared to \cite{cohen2018general,kondor2018generalization}, is that the homogeneous space $G/K$ does not change between layers. This restriction was made in order to limit the scope of our analysis, and the same goes for our restriction to unimodular Lie groups. It would be interesting to go beyond these restrictions in the future.\\

I am grateful for the support from my research group: Oscar Carlsson, Jan Gerken, Hampus Linander, Fredrik Ohlsson, Christoffer Petersson, and last but not least, my advisor Daniel Persson. Thanks also to David Müller for the interesting discussions on gauge equivariance in physics. This work was supported by the Wallenberg AI, Autonomous Systems and Software Program (WASP) funded by the Knut and Alice Wallenberg Foundation. 

\bibliographystyle{plain}
\bibliography{references}

\end{document}